\newcommand{\E}{\mathbb{E}}
\newcommand{\rvx}{\mathbf{x}}
\newcommand{\rvc}{\mathbf{c}}
\newtheorem{lemma}{Lemma}[section]
\newcommand{\Stable}{{\color{ForestGreen}{Stable}}}
\newcommand{\Unstable}{{\color{BrickRed}{Unstable}}}
\definecolor{codegreen}{rgb}{0,0.6,0}
\definecolor{codegray}{rgb}{0.5,0.5,0.5}
\definecolor{codepurple}{rgb}{0.58,0,0.82}
\definecolor{backcolor}{rgb}{0.95,0.95,0.92}
\lstdefinestyle{codestyle}{
    sensitive=true,
    morekeywords={import,as,def,None,return},
    keywordstyle=\color{codepurple},
    morecomment=[l]{\#},
    morecomment=[s]{"""}{"""},
    commentstyle=\color{codegreen},
    emph={[1]jax,numpy,jnp,Array,float},
    emphstyle={[1]\color{RoyalBlue}},
    emph={[2]prdp_loss,clip,mean,maximum,float},
    emphstyle={[2]\color{Maroon}},
    backgroundcolor=\color{backcolor},
    numberstyle=\tiny\color{codegray},
    basicstyle=\ttfamily\footnotesize,
    breakatwhitespace=false,
    breaklines=true,
    captionpos=b,
    keepspaces=true,
    numbers=left,
    numbersep=5pt,
    showspaces=false,
    showstringspaces=false,
    showtabs=false,
    tabsize=2,
    framexleftmargin=3pt
}
\definecolor{cvprblue}{rgb}{0.21,0.49,0.74}
\title{PRDP: Proximal Reward Difference Prediction\\for Large-Scale Reward Finetuning of Diffusion Models}
\author{Fei Deng$^{1,2}$\thanks{Work done during an internship at Google.},\: Qifei Wang$^{1}$,\, Wei Wei$^{3}$\thanks{Work done while working at Google.},\: Matthias Grundmann$^{1}$,\, Tingbo Hou$^{1}$\\
{\small $^1$Google,\: $^2$Rutgers University,\: $^3$Accenture}\\
{\small \,\, \url{https://fdeng18.github.io/prdp}}
}
\begin{document}
\maketitle

\begin{strip}
    \centering
    \vskip -0.5in
    \includegraphics[width=\textwidth]{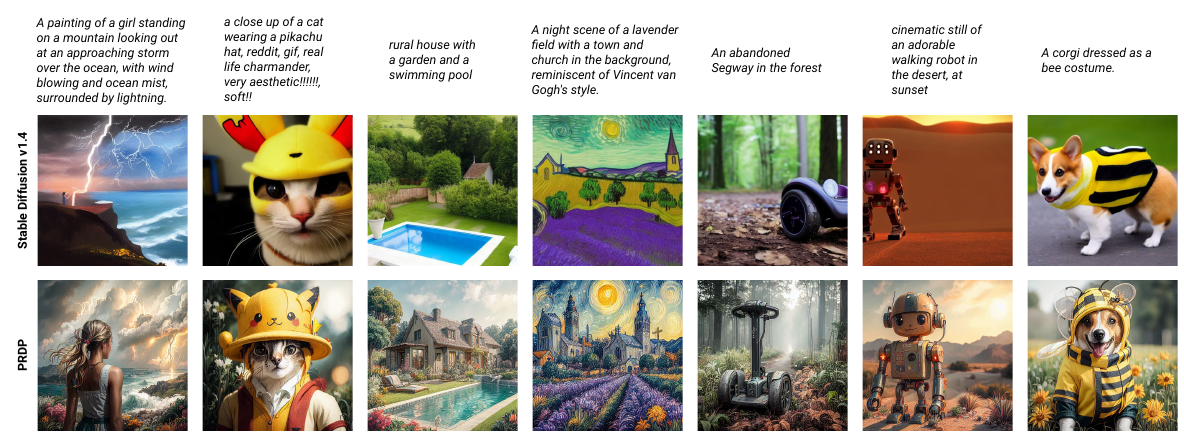}
    \captionof{figure}{\textbf{Generation samples on complex, unseen prompts.} Our proposed method, PRDP, achieves stable black-box reward finetuning for diffusion models for the first time on large-scale prompt datasets, leading to superior generation quality on complex, unseen prompts. Here, PRDP is finetuned from Stable Diffusion v1.4 on the training set prompts of Pick-a-Pic v1 dataset, using a weighted combination of rewards: PickScore $= 10$, HPSv2 $= 2$, Aesthetic $= 0.05$. The images within each column are generated using the same random seed. \label{fig:teaser}}
    \vskip 0.1in
\end{strip}

\begin{abstract}
Reward finetuning has emerged as a promising approach to aligning foundation models with downstream objectives. Remarkable success has been achieved in the language domain by using reinforcement learning (RL) to maximize rewards that reflect human preference. However, in the vision domain, existing RL-based reward finetuning methods are limited by their instability in large-scale training, rendering them incapable of generalizing to complex, unseen prompts. In this paper, we propose Proximal Reward Difference Prediction (PRDP), enabling stable black-box reward finetuning for diffusion models for the first time on large-scale prompt datasets with over 100K prompts. Our key innovation is the Reward Difference Prediction (RDP) objective that has the same optimal solution as the RL objective while enjoying better training stability. Specifically, the RDP objective is a supervised regression objective that tasks the diffusion model with predicting the reward difference of generated image pairs from their denoising trajectories. We theoretically prove that the diffusion model that obtains perfect reward difference prediction is exactly the maximizer of the RL objective. We further develop an online algorithm with proximal updates to stably optimize the RDP objective. In experiments, we demonstrate that PRDP can match the reward maximization ability of well-established RL-based methods in small-scale training. Furthermore, through large-scale training on text prompts from the Human Preference Dataset v2 and the Pick-a-Pic v1 dataset, PRDP achieves superior generation quality on a diverse set of complex, unseen prompts whereas RL-based methods completely fail.
\end{abstract}    
\section{Introduction}

Diffusion models have achieved remarkable success in generative modeling of continuous data, especially in photorealistic text-to-image synthesis~\citep{sohl2015deep,ddpm,song2021scorebased,adm,stable-diffusion,dalle2,imagen,glide}. However, the maximum likelihood training objective of diffusion models is often misaligned with their downstream use cases, such as generating novel compositions of objects unseen during training, and producing images that are aesthetically preferred by humans.

A similar misalignment problem exists in language models, where exactly matching the model output to the training distribution tends to yield undesirable model behavior. For example, the model may output biased, toxic, or harmful content. A successful solution, called reinforcement learning from human feedback (RLHF)~\citep{ziegler2019fine,stiennon2020learning,InstructGPT,bai2022training}, is to use reinforcement learning (RL) to finetune the language model such that it maximizes some reward function that reflects human preference. Typically, the reward function is defined by a reward model pretrained from human preference data.

Inspired by the success of RLHF in language models, researchers have developed several reward models in the vision domain~\citep{image-reward,wu2023human,HPSv2,PickScore,lee2023aligning} that are similarly trained to be aligned with human preference. Furthermore, two recent works, DDPO~\citep{ddpo} and DPOK~\citep{dpok}, have explored using RL to finetune diffusion models. They both view the denoising process as a Markov decision process~\citep{fan2023optimizing}, and apply policy gradient methods such as PPO~\citep{schulman2017ppo} to maximize rewards.

However, policy gradients are notoriously prone to high variance, causing training instability. To reduce variance, a common approach is to normalize the rewards by subtracting their expected value~\citep{williams1992simple,sutton2018reinforcement}. DPOK fits a value function to estimate the expected reward, showing promising results when trained on ${\sim} 200$ prompts. Alternatively, DDPO maintains a separate buffer for each prompt to track the mean and variance of rewards, demonstrating stable training on ${\sim} 400$ prompts and better performance than DPOK. Nevertheless, we find that DDPO still suffers from training instability on larger numbers of prompts, depriving it of the benefits offered by training on large-scale prompt datasets.

In this paper, we propose Proximal Reward Difference Prediction (PRDP), a scalable reward maximization algorithm that does not rely on policy gradients. To the best of our knowledge, PRDP is the first method that achieves stable large-scale finetuning of diffusion models on more than $100$K prompts for black-box reward functions.

Inspired by the recent success of DPO~\citep{rafailov2023dpo} that converts the RLHF objective for language models into a supervised classification objective, we derive for diffusion models a new supervised regression objective, called Reward Difference Prediction (RDP), that has the same optimal solution as the RLHF objective while enjoying better training stability. Specifically, our RDP objective tasks the diffusion model with predicting the reward difference of generated image pairs from their denoising trajectories. We prove that the diffusion model that obtains perfect reward difference prediction is exactly the maximizer of the RLHF objective. We further propose proximal updates and online optimization to improve training stability and generation quality.

Our contributions are summarized as follows:
\begin{itemize}
    \item We propose PRDP, a scalable reward finetuning method for diffusion models, with a new reward difference prediction objective and its stable optimization algorithm.
    \item PRDP achieves stable black-box reward maximization for diffusion models for the first time on large-scale prompt datasets with over $100$K prompts.
    \item PRDP exhibits superior generation quality and generalization to unseen prompts through large-scale training.
\end{itemize}

\section{Preliminaries}

In this section, we briefly introduce the generative process of denoising diffusion probabilistic models (DDPMs)~\citep{sohl2015deep,ddpm,song2021scorebased}. Given a text prompt $\rvc$, a text-to-image DDPM $\pi_\theta$ with parameters $\theta$ defines a text-conditioned image distribution $\pi_\theta(\rvx_0 | \rvc)$ as follows:
\begin{align}
\label{eqn:diffusion}
\begin{split}
    \pi_\theta(\rvx_0 | \rvc) &= \int \pi_\theta(\rvx_{0:T} | \rvc) \, \mathrm{d}\rvx_{1:T} \\
    &= \int p(\rvx_T) \prod_{t=1}^T \pi_\theta(\rvx_{t-1} | \rvx_t, \rvc) \, \mathrm{d}\rvx_{1:T},
\end{split}
\end{align}
where $\rvx_0$ is the image, and $\rvx_{1:T}$ are latent variables of the same dimension as $\rvx_0$. Typically, $p(\rvx_T) = \mathcal{N}(\mathbf{0}, \mathbf{I})$, and
\begin{align}
    \pi_\theta(\rvx_{t-1} | \rvx_t, \rvc) = \mathcal{N}(\rvx_{t-1}; \bm{\mu}_\theta(\rvx_t, \rvc), \sigma_t^2\mathbf{I})
\end{align}
is a Gaussian distribution with learnable mean and fixed covariance. To generate an image $\rvx_0 \sim \pi_\theta(\rvx_0 | \rvc)$, DDPM uses ancestral sampling. That is, it samples the full denoising trajectory $\rvx_{0:T} \sim \pi_\theta(\rvx_{0:T} | \rvc)$, by first sampling $\rvx_T \sim p(\rvx_T)$, and then sampling $\rvx_{t-1} \sim \pi_\theta(\rvx_{t-1} | \rvx_t, \rvc)$ for $t = T, \dots, 1$. Conversely, given a denoising trajectory $\rvx_{0:T}$, we can analytically compute its log-likelihood as
\begin{align}
    \label{eqn:log_likelihood}
    \log \pi_\theta(\rvx_{0:T} | \rvc) &= \log p(\rvx_T) + \sum_{t=1}^T \log \pi_\theta(\rvx_{t-1} | \rvx_t, \rvc) \\
    &= -\frac{1}{2}\sum_{t=1}^T \frac{\lVert \rvx_{t-1} - \bm{\mu}_\theta(\rvx_t, \rvc) \rVert^2}{\sigma_t^2} + C,
\end{align}
where $C$ is a constant independent of $\theta$.

\section{Method}

\begin{figure*}[t]
    \centering
    \includegraphics[width=\textwidth]{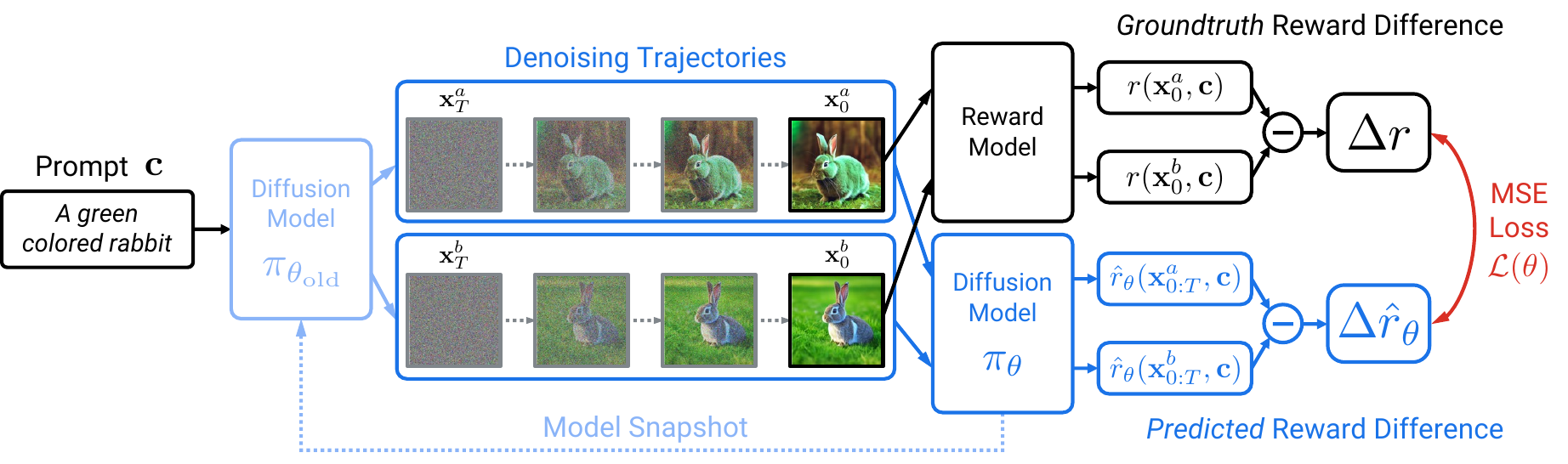}
    \caption{\textbf{PRDP framework.} PRDP mitigates the instability of policy gradient methods by converting the RLHF objective to an equivalent supervised regression objective. Specifically, given a text prompt, PRDP samples two images, and tasks the diffusion model with predicting the reward difference of these two images from their denoising trajectories. The diffusion model is updated by stochastic gradient descent on the MSE loss that measures the prediction error. We prove that the MSE loss and the RLHF objective have the same optimal solution.}
    \label{fig:prdp}
\end{figure*}

\subsection{Reward Difference Prediction for\\KL-Regularized Reward Maximization}

We start derivation from the typical RLHF objective~\citep{dpok}:
\begin{align}
\label{eqn:rlhf_objective}
    \max_{\pi_\theta} \ \E_{\rvx_0, \rvc} \! \left[ r(\rvx_0, \rvc) - \beta \mathrm{KL}[\pi_\theta(\rvx_0 | \rvc) || \pi_\mathrm{ref}(\rvx_0 | \rvc)] \right].
\end{align}
Here, we seek to finetune the diffusion model $\pi_\theta$ by maximizing a given reward function $r(\rvx_0, \rvc)$ with a KL regularization, whose strength is controlled by a hyperparameter $\beta$. The reward function can be a pretrained reward model (\eg, HPSv2~\citep{HPSv2}, PickScore~\citep{PickScore}) that measures the generation quality, and the KL regularization discourages $\pi_\theta$ from deviating too far from the pretrained diffusion model $\pi_\mathrm{ref}$ (\eg, Stable Diffusion~\citep{stable-diffusion}). This helps $\pi_\theta$ to preserve the overall generation capability of $\pi_\mathrm{ref}$, and keeps the generated images $\rvx_0$ close to the distribution where the reward model is accurate. The expectation is taken over text prompts $\rvc \sim p(\rvc)$ and images $\rvx_0 \sim \pi_\theta(\rvx_0 | \rvc)$, where $p(\rvc)$ is a predefined prompt distribution, usually a uniform distribution over a set of training prompts.

In contrast to language models, the KL regularization in \cref{eqn:rlhf_objective} cannot be computed analytically, due to the intractable integral defined in \cref{eqn:diffusion}. Hence, we instead maximize a lower bound of the objective in \cref{eqn:rlhf_objective}:
\begin{align}
\label{eqn:objective}
    \max_{\pi_\theta} \ \E_{\rvx_0, \rvc} \! \left[ r(\rvx_0, \rvc) - \beta \mathrm{KL}[\pi_\theta(\bar{\rvx} | \rvc) || \pi_\mathrm{ref}(\bar{\rvx} | \rvc)] \right],
\end{align}
where $\bar{\rvx} \coloneqq \rvx_{0:T}$ is the full denoising trajectory. We provide the proof of lower bound in \cref{sec:lower_bound}.

While it is possible to apply REINFORCE~\citep{williams1992simple} or more advanced policy gradient methods~\citep{schulman2017ppo,ddpo,dpok} to optimize \cref{eqn:objective}, we empirically find they are hard to scale to large numbers of prompts due to training instability. Inspired by DPO~\citep{rafailov2023dpo}, we propose to reformulate \cref{eqn:objective} into a supervised learning objective, allowing stable training on more than $100$K prompts.

First, we derive the optimal solution to \cref{eqn:objective} as:
\begin{align}
\label{eqn:optimal_solution}
    \pi_{\theta^\star}(\bar{\rvx} | \rvc) = \frac{1}{Z(\rvc)} \pi_\mathrm{ref}(\bar{\rvx} | \rvc) \exp \! \left( \frac{1}{\beta} r(\rvx_0, \rvc) \right),
\end{align}
where $Z(\rvc) \! = \! \int \! \pi_\mathrm{ref}(\bar{\rvx} | \rvc) \! \exp \! \left( r(\rvx_0, \rvc) / \beta \right) \! \mathrm{d}\bar{\rvx}$ is the partition function. Proof can be found in \cref{sec:maximizer}. Since $Z(\rvc)$ is intractable, \cref{eqn:optimal_solution} cannot be directly used to compute $\pi_{\theta^\star}$. However, it reveals that $\pi_{\theta^\star}$ must satisfy
\begin{align}
    \log \frac{\pi_{\theta^\star}(\bar{\rvx} | \rvc)}{\pi_\mathrm{ref}(\bar{\rvx} | \rvc)} = \frac{1}{\beta} r(\rvx_0, \rvc) - \log Z(\rvc)
\end{align}
for all $\bar{\rvx}$ and $\rvc$. This allows us to cancel the $\log Z(\rvc)$ term by considering two denoising trajectories $\bar{\rvx}^a$ and $\bar{\rvx}^b$ that correspond to the same text prompt $\rvc$:
\begin{align}
\label{eqn:reward_diff}
    \log \frac{\pi_{\theta^\star}(\bar{\rvx}^a | \rvc)}{\pi_\mathrm{ref}(\bar{\rvx}^a | \rvc)} - \log \frac{\pi_{\theta^\star}(\bar{\rvx}^b | \rvc)}{\pi_\mathrm{ref}(\bar{\rvx}^b | \rvc)} = \frac{r(\rvx_0^a, \rvc) - r(\rvx_0^b, \rvc)}{\beta}.
\end{align}
Define
\begin{align}
    \label{eqn:r_theta}
    \hat{r}_\theta(\bar{\rvx}, \rvc) &\coloneqq \log \frac{\pi_\theta(\bar{\rvx} | \rvc)}{\pi_\mathrm{ref}(\bar{\rvx} | \rvc)}, \\
    \Delta \hat{r}_\theta(\bar{\rvx}^a, \bar{\rvx}^b, \rvc) &\coloneqq \hat{r}_\theta(\bar{\rvx}^a, \rvc) - \hat{r}_\theta(\bar{\rvx}^b, \rvc), \\
    \Delta r(\rvx_0^a, \rvx_0^b, \rvc) &\coloneqq r(\rvx_0^a, \rvc) - r(\rvx_0^b, \rvc),
\end{align}
then \cref{eqn:reward_diff} becomes
\begin{align}
    \Delta \hat{r}_{\theta^\star}(\bar{\rvx}^a, \bar{\rvx}^b, \rvc) = \Delta r(\rvx_0^a, \rvx_0^b, \rvc) / \beta.
\end{align}
This motivates us to optimize $\pi_\theta$ by minimizing the following mean squared error (MSE) loss:
\begin{align}
\label{eqn:mse_loss}
    \mathcal{L}(\theta) ={}& \E_{\bar{\rvx}^a, \bar{\rvx}^b, \rvc} \ [l_\theta(\bar{\rvx}^a, \bar{\rvx}^b, \rvc)] \\ \notag
    \coloneqq{}& \E_{\bar{\rvx}^a, \bar{\rvx}^b, \rvc} \left\lVert \Delta \hat{r}_\theta(\bar{\rvx}^a, \bar{\rvx}^b, \rvc) - \Delta r(\rvx_0^a, \rvx_0^b, \rvc) / \beta \right\rVert^2.
\end{align}
We call $\mathcal{L}(\theta)$ the Reward Difference Prediction (RDP) objective, since we learn $\pi_\theta$ by predicting the reward difference $\Delta r(\rvx_0^a, \rvx_0^b, \rvc)$ instead of directly maximizing the reward. An illustration is provided in \cref{fig:prdp}. We further show in \cref{sec:equivalent_condition} that
\begin{align}
    \pi_\theta = \pi_{\theta^\star} \iff \mathcal{L}(\theta) = 0.
\end{align}

\subsection{Online Optimization}

\begin{figure*}[t]
    \centering
    \includegraphics[width=0.97\textwidth]{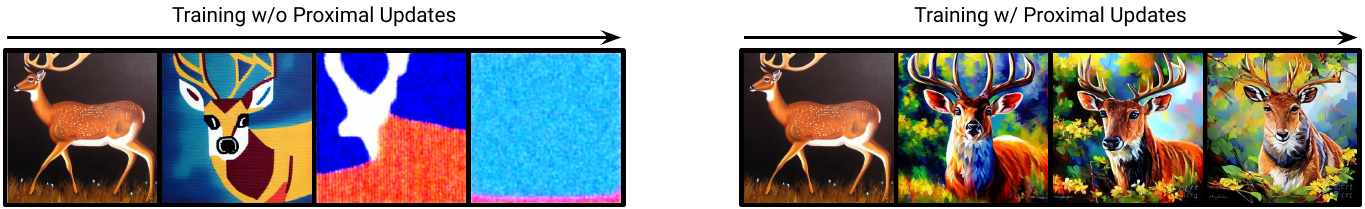}
    \caption{\textbf{Effect of proximal updates.} We show generation samples during the PRDP training process. Here, we use the small-scale setup described in \cref{sec:exp_setup} and HPSv2 as the reward model. All samples use the same prompt ``A painting of a deer'' and the same random seed. (Left) Without proximal updates, training is quite unstable, and the generation quickly becomes meaningless noise. (Right) With proximal updates, the training stability is remarkably improved.}
    \label{fig:ablation_clip}
\end{figure*}

\begin{algorithm}[t]
\small
\caption{PRDP Training}
\label{alg:training}
\begin{algorithmic}[1]
    \Require pretrained diffusion model $\pi_\mathrm{ref}$, training prompt distribution $p(\rvc)$, reward model $r(\rvx_0, \rvc)$, training epochs $E$, gradient updates $K$ per epoch, prompt batch size $N$, image batch size $B$ per prompt
    \State $\pi_\theta \gets \pi_\mathrm{ref}$
    {\color{gray} \Comment{Initialization}}
    \For{epoch $e = 1, \dots, E$}
        \State $\pi_{\theta_\mathrm{old}} \gets \pi_\theta$
        {\color{gray} \Comment{Model snapshot}}
        \State $\{\rvc^n\}_{n=1}^N \stackrel{iid}{\sim} p(\rvc)$
        {\color{gray} \Comment{Sample text prompts}}
        \For{\textbf{each} text prompt $\rvc^n$}
            \State $\{\bar{\rvx}^{n,i}\}_{i=1}^B \stackrel{iid}{\sim} \pi_{\theta_\mathrm{old}}(\bar{\rvx} | \rvc^n)$
            {\color{gray} \Comment{Denoising trajectories}}
        \EndFor
        \State Obtain rewards $r(\rvx_0^{n,i}, \rvc^n)$ for all $n, i$
        \For{gradient step $k = 1, \dots, K$}
            \State $\mathcal{L}(\theta) \gets \frac{1}{N\binom{B}{2}} \sum_{n=1}^N \sum_{1 \leq i < j \leq B} l_\theta(\bar{\rvx}^{n,i}, \bar{\rvx}^{n,j}, \rvc^n)$
            \State Update model parameters $\theta$ by gradient descent
        \EndFor
    \EndFor
\end{algorithmic}
\end{algorithm}

To estimate the expectation in $\mathcal{L}(\theta)$, we need samples of denoising trajectories $\bar{\rvx}^a$ and $\bar{\rvx}^b$ that correspond to the same prompt $\rvc$. A straightforward approach, as similarly done in DPO, is to sample $\bar{\rvx}^a, \bar{\rvx}^b \stackrel{iid}{\sim} \pi_\mathrm{ref}(\bar{\rvx} | \rvc)$. This can be implemented as uniform sampling from a fixed offline dataset generated by the pretrained model $\pi_\mathrm{ref}$.

However, the offline dataset lacks sufficient coverage of samples from $\pi_\theta(\bar{\rvx} | \rvc)$ that keeps updating, leading to suboptimal generation quality. Therefore, we propose an online optimization procedure, inspired by online RL algorithms. Specifically, we sample $\bar{\rvx}^a, \bar{\rvx}^b \stackrel{iid}{\sim} \pi_{\theta_\mathrm{old}}(\bar{\rvx} | \rvc)$, where $\theta_\mathrm{old}$ is a snapshot of the diffusion model parameters $\theta$, and we set $\theta_\mathrm{old} \gets \theta$ every $K$ gradient updates. In practice, we use $\pi_{\theta_\mathrm{old}}$ to generate a batch of denoising trajectories, and then use all pairs of denoising trajectories in the batch to compute the loss $\mathcal{L}(\theta)$. Details are provided in \cref{alg:training}. We will show in \cref{sec:online} that online optimization significantly improves generation quality.

\subsection{Proximal Updates for Stable Training}
\label{sec:clipping}

We find in our experiments that directly optimizing \cref{eqn:mse_loss} is prone to training instability, as illustrated in \cref{fig:ablation_clip} (Left). This is likely due to excessively large model updates during training. To resolve this issue, we propose proximal updates that remove the incentive for moving $\pi_\theta$ too far away from $\pi_{\theta_\mathrm{old}}$. Inspired by PPO~\citep{schulman2017ppo}, we achieve this by clipping the log probability ratio $\log \! \left( \pi_{\theta}(\bar{\rvx} | \rvc) / \pi_{\theta_\mathrm{old}}(\bar{\rvx} | \rvc) \right)$ to be within a small interval $[-\epsilon', \epsilon']$. This can be implemented by clipping the $\hat{r}_\theta(\bar{\rvx}, \rvc)$ as $\hat{r}_\theta^\mathrm{clip}(\bar{\rvx}, \rvc) \coloneqq$
\begin{align}
    \label{eqn:clipping}
    \mathrm{clip} \left( \hat{r}_\theta(\bar{\rvx}, \rvc), \hat{r}_{\theta_\mathrm{old}}(\bar{\rvx}, \rvc) - \epsilon', \hat{r}_{\theta_\mathrm{old}}(\bar{\rvx}, \rvc) + \epsilon' \right),
\end{align}
because $\log \! \left( \pi_{\theta}(\bar{\rvx} | \rvc) / \pi_{\theta_\mathrm{old}}(\bar{\rvx} | \rvc) \right) = \hat{r}_\theta(\bar{\rvx}, \rvc) - \hat{r}_{\theta_\mathrm{old}}(\bar{\rvx}, \rvc)$.
We then use $\hat{r}_\theta^\mathrm{clip}(\bar{\rvx}, \rvc)$ to compute the clipped MSE loss $l_\theta^\mathrm{clip}(\bar{\rvx}^a, \bar{\rvx}^b, \rvc) \coloneqq$
\begin{align}
    \left\lVert \Delta \hat{r}_\theta^\mathrm{clip}(\bar{\rvx}^a, \bar{\rvx}^b, \rvc) - \Delta r(\rvx_0^a, \rvx_0^b, \rvc) / \beta \right\rVert^2,
\end{align}
where $\Delta \hat{r}_\theta^\mathrm{clip}(\bar{\rvx}^a, \bar{\rvx}^b, \rvc) \coloneqq \hat{r}_\theta^\mathrm{clip}(\bar{\rvx}^a, \rvc) - \hat{r}_\theta^\mathrm{clip}(\bar{\rvx}^b, \rvc)$.
Similar to PPO~\citep{schulman2017ppo}, our final loss is the maximum of the clipped and unclipped MSE loss:
\begin{align}
    l_\theta(\bar{\rvx}^a, \bar{\rvx}^b, \rvc) \gets \max ( l_\theta(\bar{\rvx}^a, \bar{\rvx}^b, \rvc), l_\theta^\mathrm{clip}(\bar{\rvx}^a, \bar{\rvx}^b, \rvc) ).
\end{align}
This ensures that we minimize an upper bound of the original loss, making the optimization problem well-defined.

In practice, the clipping in \cref{eqn:clipping} is decomposed and applied at each denoising step $t$. First, $\hat{r}_\theta(\bar{\rvx}, \rvc)$ can be decomposed as $\hat{r}_\theta(\bar{\rvx}, \rvc) = \sum_{t=1}^T \hat{r}_{\theta, t}(\bar{\rvx}, \rvc)$, where
\begin{align}
    \hat{r}_{\theta, t}(\bar{\rvx}, \rvc) \coloneqq \log \! \left( \pi_{\theta}(\rvx_{t-1} | \rvx_t, \rvc) / \pi_\mathrm{ref}(\rvx_{t-1} | \rvx_t, \rvc) \right).
\end{align}
We apply clipping to each $\hat{r}_{\theta, t}(\bar{\rvx}, \rvc)$ as $\hat{r}_{\theta, t}^\mathrm{clip}(\bar{\rvx}, \rvc) \coloneqq$
\begin{align}
    \mathrm{clip} \left( \hat{r}_{\theta, t}(\bar{\rvx}, \rvc), \hat{r}_{\theta_\mathrm{old}, t}(\bar{\rvx}, \rvc) - \epsilon, \hat{r}_{\theta_\mathrm{old}, t}(\bar{\rvx}, \rvc) + \epsilon \right),
\end{align}
where $\epsilon$ is the stepwise clipping range. Finally, we replace \cref{eqn:clipping} with
\begin{align}
    \hat{r}_\theta^\mathrm{clip}(\bar{\rvx}, \rvc) &\coloneqq \sum_{t=1}^T \hat{r}_{\theta, t}^\mathrm{clip}(\bar{\rvx}, \rvc).
\end{align}
As shown in \cref{fig:ablation_clip} (Right), our proposed proximal updates can remarkably improve optimization stability.

\begin{figure*}[t]
    \centering
    \includegraphics[width=\textwidth]{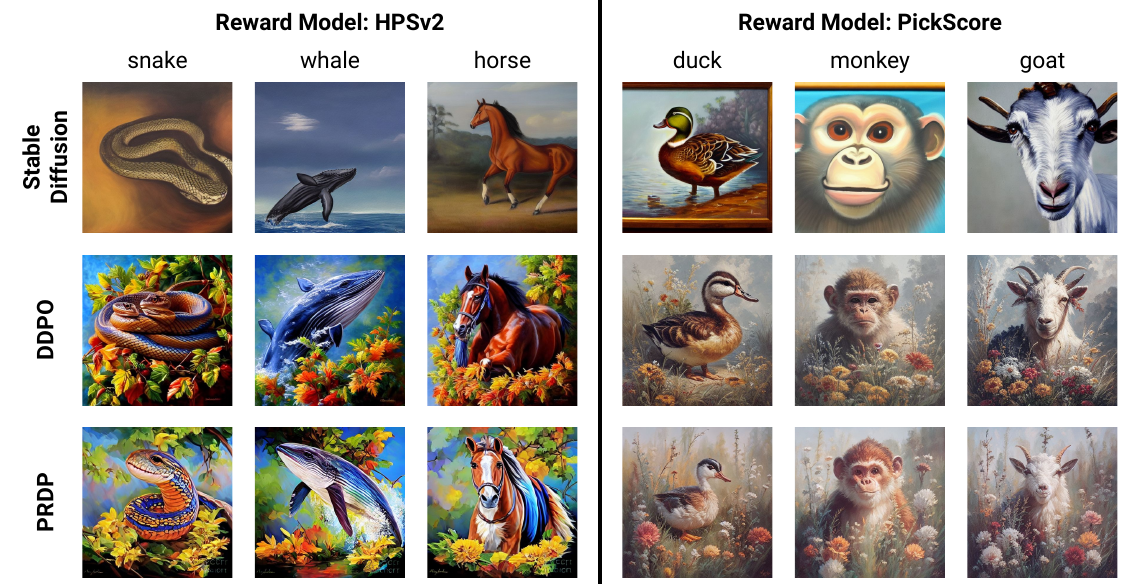}
    \caption{\textbf{Generation samples from small-scale training.} DDPO and PRDP are finetuned from Stable Diffusion v1.4 on $45$ prompts consisting of common animal names, with HPSv2 (Left) and PickScore (Right) as the reward model. Samples within each column use the same random seed. The prompt template is ``A painting of a $\langle$animal$\rangle$'', where the $\langle$animal$\rangle$ is listed on top of each column. All prompts are seen during training. Both DDPO and PRDP significantly improve the generation quality, with PRDP being slightly better.}
    \label{fig:common_animals}
\end{figure*}

\section{Experiments}
\label{sec:exp}

In our experiments, we first verify on a set of $45$ prompts that PRDP can match the reward maximization ability of DDPO~\citep{ddpo}, which is based on the well-established PPO~\citep{schulman2017ppo} algorithm. We then conduct a large-scale training on more than $100$K prompts from the training set of HPDv2~\citep{HPSv2}, showing that PRDP can successfully handle large-scale training whereas DDPO fails. We further perform a large-scale multi-reward finetuning on the training set prompts of Pick-a-Pic v1 dataset~\citep{PickScore}, highlighting the superior generation quality of PRDP on complex, unseen prompts. Finally, we showcase the advantages of our algorithm design, such as online optimization and KL regularization.

\subsection{Experimental Setup}
\label{sec:exp_setup}

To perform reward finetuning, we need a pretrained diffusion model, a pretrained reward model, and a training set of prompts. For all experiments, we use Stable Diffusion (SD) v1.4~\citep{stable-diffusion} as the pretrained diffusion model, and finetune the full UNet weights. For sampling, during both training and evaluation, we use the DDPM sampler~\citep{ddpm} with $50$ denoising steps and a classifier-free guidance~\citep{ho2021classifierfree} scale of $5.0$.

\textbf{Small-scale setup.}
We use a set of $45$ prompts, with the template ``A painting of a $\langle$animal$\rangle$'', where the $\langle$animal$\rangle$ is taken from the list of common animal names used in DDPO. We conduct reward finetuning separately for two recently proposed reward models, HPSv2~\citep{HPSv2} and PickScore~\citep{PickScore}. We train for $100$ epochs, where in each epoch, we sample $32$ prompts and $16$ images per prompt. The evaluation uses the same set of prompts as training. We report reward scores averaged over $256$ random samples per prompt.

\begin{table}[t]
  \caption{\textbf{Reward score comparison on small-scale training.}}
  \label{tab:common_animals}
  \centering
  \begin{adjustbox}{max width=\columnwidth}
  \begin{tabular}{cccc}
    \toprule
              & SD v1.4  & DDPO     & PRDP \\
    \midrule
    HPSv2     & $0.2855$ & $0.3398$ & $\mathbf{0.3471}$ \\
    \midrule
    PickScore & $0.2179$ & $0.2664$ & $\mathbf{0.2700}$ \\
    \bottomrule
  \end{tabular}
  \end{adjustbox}
\end{table}

\textbf{Large-scale setup.}
Following DRaFT~\citep{draft}, we use more than $100$K prompts from the training set of HPDv2, and finetune for HPSv2 and PickScore separately. We train for $1000$ epochs. In each epoch, we sample $64$ prompts and $8$ images per prompt. We evaluate the finetuned model on $500$ randomly sampled training prompts, as well as a variety of unseen prompts, including $500$ prompts from the Pick-a-Pic v1 test set, and $800$ prompts from each of the four benchmark categories of HPDv2, namely animation, concept art, painting, and photo. We report reward scores averaged over $64$ random samples per prompt.

\begin{figure*}[t]
    \centering
    \includegraphics[width=\textwidth]{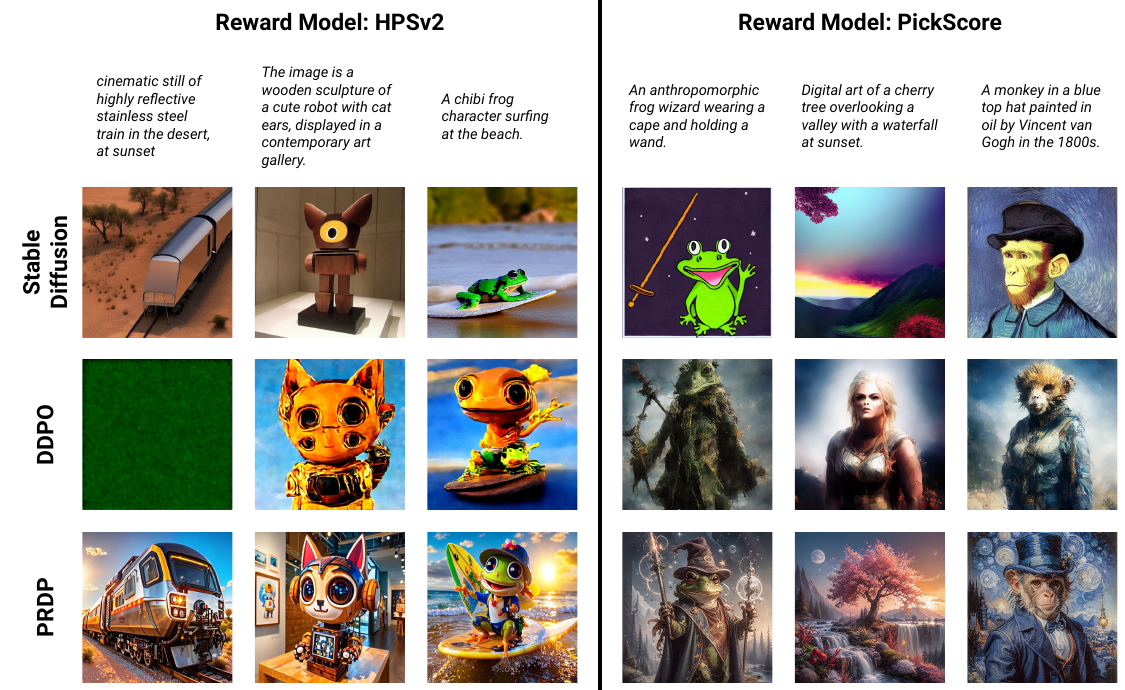}
    \caption{\textbf{Generation samples from large-scale training.} DDPO and PRDP are finetuned from Stable Diffusion v1.4 on over $100$K prompts from the training set of HPDv2, with HPSv2 (Left) and PickScore (Right) as the reward model. Samples within each column are generated from the prompt shown on top, using the same random seed. All prompts are unseen during training. PRDP significantly improves the generation quality over Stable Diffusion, whereas DDPO fails to generate reasonable results.}
    \label{fig:large_scale_samples}
\end{figure*}

\textbf{Large-scale multi-reward setup.}
We mostly follow the large-scale setup, except that we use the training set prompts of Pick-a-Pic v1 dataset, and a weighted combination of rewards: PickScore $= 10$, HPSv2 $= 2$, Aesthetic $= 0.05$, where Aesthetic is the LAION aesthetic score.

\textbf{Baselines.}
DDPO~\citep{ddpo} and DPOK~\citep{dpok} are the two most recent RL finetuning methods for black-box rewards. Since DDPO has demonstrated better performance than DPOK, we mainly compare to DDPO. To ensure a fair comparison, we train DDPO and PRDP for the same number of epochs, with the same number of reward queries per epoch. We also use the same random seeds to sample images for evaluation.

\begin{table*}[t]
  \caption{\textbf{Reward score comparison on large-scale training.}}
  \label{tab:large_scale}
  \centering
  \begin{adjustbox}{max width=\textwidth}
  \begin{tabular}{clcccccc}
    \toprule
    \multirow{2}{*}[-0.8em]{\makecell{Reward\\Model}} & \multirow{2}{*}[-0.8em]{Method} & Seen Prompts & \multicolumn{5}{c}{Unseen Prompts} \\ \cmidrule(lr){3-3} \cmidrule(lr){4-8} && \makecell{HPD v2\\Training Set} & \makecell{Pick-a-Pic v1\\Test Set} & \makecell{HPD v2\\Animation} & \makecell{HPD v2\\Concept Art} & \makecell{HPD v2\\Painting} & \makecell{HPD v2\\Photo} \\
    \midrule
    \multirow{3}{*}{HPSv2}
        & SD v1.4 & $0.2685$ & $0.2665$ & $0.2737$ & $0.2656$ & $0.2654$ & $0.2750$ \\
        & DDPO                  & $0.2464$ & $0.2501$ & $0.2673$ & $0.2558$ & $0.2570$ & $0.2093$ \\
        & PRDP                  & $\mathbf{0.3175}$ & $\mathbf{0.3050}$ & $\mathbf{0.3223}$ & $\mathbf{0.3175}$ & $\mathbf{0.3172}$ & $\mathbf{0.3159}$ \\
    \midrule
    \multirow{3}{*}{PickScore}
        & SD v1.4 & $0.2092$ & $0.2082$ & $0.2111$ & $0.2062$ & $0.2059$ & $0.2172$ \\
        & DDPO                  & $0.2032$ & $0.1992$ & $0.2077$ & $0.2125$ & $0.2124$ & $0.1780$ \\
        & PRDP                  & $\mathbf{0.2424}$ & $\mathbf{0.2344}$ & $\mathbf{0.2450}$ & $\mathbf{0.2441}$ & $\mathbf{0.2448}$ & $\mathbf{0.2387}$ \\
    \bottomrule
  \end{tabular}
  \end{adjustbox}
\end{table*}

\begin{figure*}[t]
    \centering
    \includegraphics[width=0.97\textwidth]{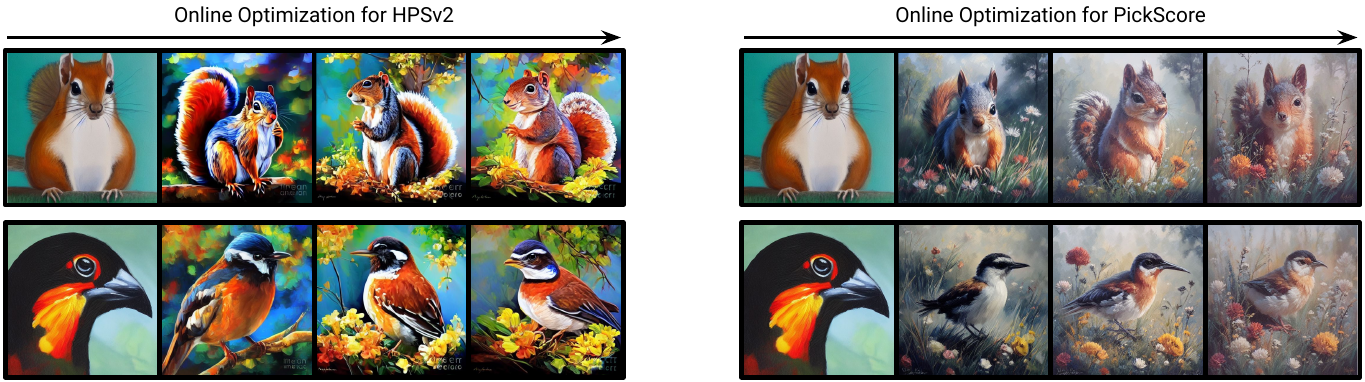}
    \caption{\textbf{Effect of online optimization.} We show generation samples during the PRDP training process, with HPSv2 (Left) and PickScore (Right) as the reward model. We follow the small-scale training setup. The prompts for the first and the second rows are ``A painting of a squirrel'' and ``A painting of a bird'', respectively. Samples within each row use the same random seed. It can be observed that online optimization continually improves the generation quality.}
    \label{fig:online_samples}
\end{figure*}

\subsection{Main Results}

\textbf{Small-scale finetuning.}
We show generation samples from small-scale finetuning in \cref{fig:common_animals} and reward scores in \cref{tab:common_animals}. Both DDPO and PRDP can significantly improve the generation quality over Stable Diffusion, with more vivid colors and details. Quantitatively, PRDP achieves slightly better reward scores than DDPO. This verifies that PRDP can match the reward maximization ability of well-established policy gradient methods.

\textbf{Large-scale finetuning.}
We present generation samples from large-scale finetuning in \cref{fig:large_scale_samples} and reward scores in \cref{tab:large_scale}. We observe that Stable Diffusion generates images with relevant content but low quality. Meanwhile, DDPO fails to give reasonable results. It generates irrelevant, low quality images or even meaningless noise, leading to lower reward scores than Stable Diffusion. This is due to the instability of DDPO in large-scale training, which we further investigate in \cref{sec:ddpo_instability}. In contrast, PRDP maintains stability in the large-scale setup, and significantly improves the generation quality on both seen and unseen prompts.

\textbf{Large-scale multi-reward finetuning.}
We provide generation samples in \cref{fig:teaser,fig:pick_a_pic_test,fig:anime,fig:concept_art,fig:painting,fig:photo}, and reward scores in \cref{tab:multi_reward}, showing the superior generation quality of PRDP on a diverse set of complex, unseen prompts.

\subsection{Effect of Online Optimization}
\label{sec:online}

In this section, we show that online optimization has a great advantage over offline optimization. To ensure a fair comparison, we use the same number of reward queries and gradient updates for both methods. Specifically, following the small-scale setup, for online training, we use $100$ epochs, where each epoch makes $512$ queries to the reward model. For offline training, we sample $51200$ images from the pretrained Stable Diffusion, obtain their rewards, and then perform the same total number of gradient updates as in online training. We show generation samples during the online optimization process in \cref{fig:online_samples}, and quantitative comparisons in \cref{fig:online_training}. We observe that online optimization continually improves the generation quality, achieving significantly better reward scores than offline optimization.

\begin{figure}[t]
    \centering
    \includegraphics[width=\columnwidth]{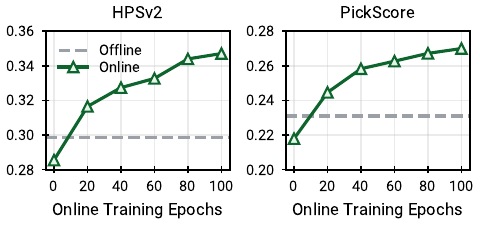}
    \caption{\textbf{Comparison of online and offline optimization.} We evaluate the reward scores of model checkpoints during online optimization and the final model obtained by offline optimization. We follow the small-scale training setup, and optimize the models for HPSv2 and PickScore separately. Online optimization matches the performance of offline optimization in ${\sim} 10$ epochs, and keeps improving the reward score afterwards.}
    \label{fig:online_training}
\end{figure}

\begin{figure}[t]
    \centering
    \includegraphics[width=\columnwidth]{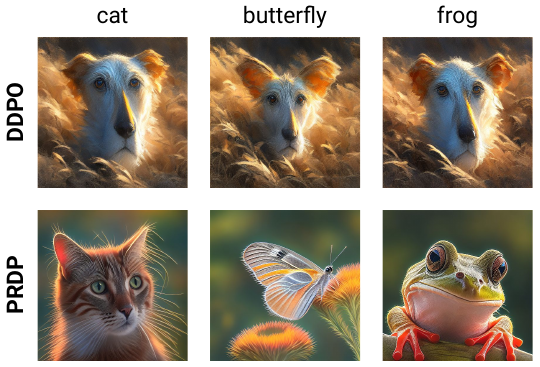}
    \caption{\textbf{Effect of KL regularization.} We show generation samples from DDPO and PRDP when optimizing the LAION aesthetic score. We use the small-scale training setup, except that we train for $250$ epochs. Samples within each column are generated from the prompt shown on top, using the same random seed. DDPO, without KL regularization, over-optimizes the reward, generating similar images for all prompts. In contrast, PRDP, formulated with KL regularization, successfully preserves text-image alignment.}
    \label{fig:kl}
\end{figure}

\subsection{Effect of KL Regularization}

A common limitation of reward finetuning is reward hacking, where the finetuned diffusion model exploits inaccuracies in the reward model, and produces undesired images with high reward scores. In this section, we show that the KL regularization in our PRDP formulation can help alleviate this issue. For this purpose, we use the LAION aesthetic predictor as the reward model. It only takes images as input, and can be exploited by disregarding text-image alignment. We follow the small-scale setup, except that we train for $250$ epochs and directly use the $45$ common animal names as prompts. As demonstrated in \cref{fig:kl}, DDPO, without KL regularization, is prone to reward hacking. It completely ignores the text prompts and generates similar images for all prompts. In contrast, PRDP with $\beta = 10$ can successfully preserve the text-image alignment while improving the aesthetic quality. More analysis can be found in \cref{sec:kl}.

\section{Related Work}
\label{sec:related_work}

\textbf{Diffusion models.}
As a new class of generative models, diffusion models \citep{sohl2015deep,ddpm,song2021scorebased} have achieved remarkable success in a wide variety of data modalities, including images \citep{adm,stable-diffusion,dalle2,imagen,glide,cascaded-diffusion,sr3,palette}, videos \citep{imagen-video,singer2022make}, audios~\citep{AudioLDM}, 3D shapes \citep{zhou20213d,zeng2022lion,DreamFusion,NerfDiff}, and robotic trajectories \citep{janner2022planning,ajay2023is,chen2024simple}. To facilitate control over the content and style of generation, recent works have investigated finetuning diffusion models on various conditioning signals \citep{controlnet,mou2023t2i,ruiz2023dreambooth,textual-inversion,null-text,kawar2023imagic,styledrop,jiang2023objectcentric}. However, it remains challenging to adapt diffusion models to downstream use cases that are misaligned with the training objective, such as generating novel compositions of objects unseen during training, and producing images that are aesthetically preferred by humans. Although classifier guidance~\citep{adm} can help mitigate this issue, the classifier requires noisy images as input, making it hard to use off-the-shelf classifiers such as object detectors and aesthetic predictors for guidance. In contrast, we finetune the diffusion model to maximize rewards that reflect downstream objectives. Our method can work with generic off-the-shelf reward models that take clean images as input.

\textbf{Language model learning from human feedback.}
The maximum likelihood training objective for language models tends to yield undesirable model behavior, due to the potentially biased, toxic, or harmful content in the training data. Reinforcement learning from human feedback (RLHF) has recently emerged as a successful remedy \citep{ziegler2019fine,stiennon2020learning,WebGPT,InstructGPT,wu2021recursively,bai2022training,bai2022constitutional,glaese2022improving,liu2023chain}. Typically, a reward model is first trained from human preference data (\eg, rankings of outputs from a pretrained language model). Then, the language model is finetuned by online RL algorithms (\eg, PPO~\citep{schulman2017ppo}) to maximize the score given by the reward model. More recently, DPO~\citep{rafailov2023dpo} proposes a supervised learning method that directly optimizes the language model from preference data, skipping the reward model training and avoiding the instability of RL algorithms. Our method is inspired by DPO and PPO, but designed specifically for diffusion models.

\textbf{Reward finetuning for diffusion models.}
Inspired by the success of RLHF in the language domain, researchers have developed several reward models in the vision domain \citep{clip,blip,coca,vila,image-reward,wu2023human,HPSv2,PickScore,lee2023aligning}. Moreover, recent works have explored using these reward models to improve the generation quality of diffusion models. A simple approach, called supervised finetuning~\citep{lee2023aligning,wu2023human}, is to finetune the diffusion model toward high-reward samples from an offline dataset. Its major drawback is that the generation quality is limited by the offline dataset. For further improvement, RAFT~\citep{raft} proposes an online variant that iteratively re-generates the dataset. A more direct method for online optimization is to backpropagate the reward function gradient through the denoising process \citep{wallace2023end,image-reward,draft,prabhudesai2023aligning}. However, this only works for differentiable rewards. For generic rewards, DDPO~\citep{ddpo} and DPOK~\citep{dpok} propose RL finetuning. While they have shown promising results on small prompt sets, they are unstable in large-scale training. Our work addresses the training instability issue, achieving stable reward finetuning on large-scale prompt datasets for generic rewards. Concurrent with our work, Diffusion-DPO~\citep{Diffusion-DPO} adapts DPO to efficiently align diffusion models from large-scale offline preference data, and \citep{zhang2024large} proposes to stabilize large-scale RL finetuning by combining the diffusion model pretraining loss.

\section{Conclusion}
\label{sec:conclusion}

This paper presents PRDP, the first black-box reward finetuning method for diffusion models that is stable on large-scale prompt datasets with over $100$K prompts. We achieve this by converting the RLHF objective to an equivalent supervised regression objective and developing its stable optimization algorithm. Our large-scale experiments highlight the superior generation quality of PRDP on complex, unseen prompts, which is beyond the capability of existing RL finetuning methods. We also demonstrate that the KL regularization in the PRDP formulation can help alleviate the common issue of reward hacking. We hope that our work can inspire future research on large-scale reward finetuning for diffusion models.

\subsection*{Acknowledgments}

We thank authors of DRaFT~\citep{draft} for sharing their training prompts and reward models. We appreciate helpful discussion with Ligong Han, Yanwu Xu, Yaxuan Zhu, Zhonghao Wang, Yunzhi Zhang, Yang Zhao, and Zhisheng Xiao.

{
    \small
    \bibliographystyle{ieeenat_fullname}
    \bibliography{main}
}

\clearpage
\maketitlesupplementary
\appendix

\section{Proofs}
\label{sec:proof}

\subsection{Lower Bound of RLHF Objective}
\label{sec:lower_bound}

In \Cref{lem:lower_bound}, we prove that the objective in \Cref{eqn:objective} is a lower bound of the RLHF objective in \Cref{eqn:rlhf_objective}.

\begin{lemma}
\label{lem:lower_bound}
Given two diffusion models $\pi_\theta, \pi_\mathrm{ref}$, a prompt distribution $p(\rvc)$, a reward function $r(\rvx_0, \rvc)$, and a constant $\beta > 0$, we have:
\begin{align}
    & \E_{\rvc \sim p(\rvc)} \! \left[ \E_{\rvx_0 \sim \pi_\theta(\rvx_0 | \rvc)} [r(\rvx_0, \rvc)] - \beta \mathrm{KL}[\pi_\theta(\rvx_0 | \rvc) || \pi_\mathrm{ref}(\rvx_0 | \rvc)] \right] \\
    \geq{}& \E_{\rvc \sim p(\rvc)} \! \left[ \E_{\rvx_0 \sim \pi_\theta(\rvx_0 | \rvc)} [r(\rvx_0, \rvc)] - \beta \mathrm{KL}[\pi_\theta(\bar{\rvx} | \rvc) || \pi_\mathrm{ref}(\bar{\rvx} | \rvc)] \right],
\end{align}
where $\bar{\rvx} \coloneqq \rvx_{0:T}$ is the full denoising trajectory, and $\pi_\theta, \pi_\mathrm{ref}$ are defined as:
\begin{align}
    \pi(\rvx_0 | \rvc) = \int \pi(\rvx_{0:T} | \rvc) \, \mathrm{d}\rvx_{1:T} = \int p(\rvx_T) \prod_{t=1}^T \pi(\rvx_{t-1} | \rvx_t, \rvc) \, \mathrm{d}\rvx_{1:T}.
\end{align}
\end{lemma}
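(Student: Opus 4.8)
The plan is to reduce the claim to the monotonicity of KL divergence under marginalization. The term $\E_{\rvx_0 \sim \pi_\theta(\rvx_0 | \rvc)}[r(\rvx_0,\rvc)]$ is literally identical on both sides of the asserted inequality, and $\beta > 0$, so it suffices to prove the pointwise-in-$\rvc$ bound
\begin{align}
    \mathrm{KL}[\pi_\theta(\rvx_0 | \rvc) || \pi_\mathrm{ref}(\rvx_0 | \rvc)] \leq \mathrm{KL}[\pi_\theta(\bar{\rvx} | \rvc) || \pi_\mathrm{ref}(\bar{\rvx} | \rvc)],
\end{align}
then multiply by $-\beta$ and take $\E_{\rvc \sim p(\rvc)}$.

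First I would apply the chain rule for KL divergence to the joint over the denoising trajectory. Writing $\pi(\rvx_{0:T} | \rvc) = \pi(\rvx_0 | \rvc)\,\pi(\rvx_{1:T} | \rvx_0, \rvc)$ for both $\pi_\theta$ and $\pi_\mathrm{ref}$, and expanding the log-ratio $\log(\pi_\theta(\rvx_{0:T} | \rvc) / \pi_\mathrm{ref}(\rvx_{0:T} | \rvc))$ accordingly before integrating against $\pi_\theta(\rvx_{0:T} | \rvc)$, I obtain
\begin{align}
    \mathrm{KL}[\pi_\theta(\bar{\rvx} | \rvc) || \pi_\mathrm{ref}(\bar{\rvx} | \rvc)] ={}& \mathrm{KL}[\pi_\theta(\rvx_0 | \rvc) || \pi_\mathrm{ref}(\rvx_0 | \rvc)] \\ \notag
    &{}+ \E_{\rvx_0 \sim \pi_\theta(\rvx_0 | \rvc)}\!\big[\mathrm{KL}[\pi_\theta(\rvx_{1:T} | \rvx_0, \rvc) || \pi_\mathrm{ref}(\rvx_{1:T} | \rvx_0, \rvc)]\big].
\end{align}
The second term on the right is an expectation of a KL divergence, hence nonnegative by Gibbs' inequality (equivalently, by Jensen applied to $-\log$), which yields the desired pointwise bound. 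Then I would multiply by $-\beta < 0$ to reverse the inequality, add back the common reward term, and take $\E_{\rvc \sim p(\rvc)}$ by monotonicity of expectation, arriving at the statement.

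The only point needing a little care is the well-definedness of the conditionals $\pi(\rvx_{1:T} | \rvx_0, \rvc)$ and the interchange of integration implicit in the chain rule; these are harmless here because $\pi_\theta$ and $\pi_\mathrm{ref}$ are densities built from Gaussian transition kernels with full support, so all conditionals exist and the relevant integrands are measurable. Equivalently, one may invoke the data processing inequality directly, viewing $\rvx_0$ as the (deterministic) projection of the trajectory $\bar{\rvx}$; the chain-rule computation above is simply the self-contained proof of that instance. I expect no substantive obstacle — the main work is just bookkeeping the chain rule cleanly.
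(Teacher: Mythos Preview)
Your proposal is correct and essentially identical to the paper's own proof: both reduce the inequality to the pointwise bound $\mathrm{KL}[\pi_\theta(\rvx_0|\rvc)\,\|\,\pi_\mathrm{ref}(\rvx_0|\rvc)] \leq \mathrm{KL}[\pi_\theta(\bar{\rvx}|\rvc)\,\|\,\pi_\mathrm{ref}(\bar{\rvx}|\rvc)]$, establish it via the chain-rule decomposition $\pi(\rvx_{0:T}|\rvc)=\pi(\rvx_0|\rvc)\,\pi(\rvx_{1:T}|\rvx_0,\rvc)$, and note the expected conditional KL is nonnegative. The paper likewise frames this as an instance of the data processing inequality.
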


\begin{proof}
It suffices to show that for any $\rvc$,
\begin{align}
    \mathrm{KL}[\pi_\theta(\bar{\rvx} | \rvc) || \pi_\mathrm{ref}(\bar{\rvx} | \rvc)] \geq \mathrm{KL}[\pi_\theta(\rvx_0 | \rvc) || \pi_\mathrm{ref}(\rvx_0 | \rvc)].
\end{align}
This can be proved similarly as the data processing inequality. We provide the proof below.
\begin{align}
    \mathrm{KL}[\pi_\theta(\bar{\rvx} | \rvc) || \pi_\mathrm{ref}(\bar{\rvx} | \rvc)] &= \E_{\pi_\theta(\rvx_{0:T} | \rvc)} \! \left[ \log \frac{\pi_\theta(\rvx_{0:T} | \rvc)}{\pi_\mathrm{ref}(\rvx_{0:T} | \rvc)} \right] \\
    &= \E_{\pi_\theta(\rvx_{0:T} | \rvc)} \! \left[ \log \frac{\pi_\theta(\rvx_0 | \rvc)}{\pi_\mathrm{ref}(\rvx_0 | \rvc)} + \log \frac{\pi_\theta(\rvx_{1:T} | \rvx_0, \rvc)}{\pi_\mathrm{ref}(\rvx_{1:T} | \rvx_0, \rvc)} \right] \\
    &= \E_{\pi_\theta(\rvx_0 | \rvc)} \! \left[ \log \frac{\pi_\theta(\rvx_0 | \rvc)}{\pi_\mathrm{ref}(\rvx_0 | \rvc)} \right] + \E_{\pi_\theta(\rvx_0 | \rvc)} \! \left[ \E_{\pi_\theta(\rvx_{1:T} | \rvx_0, \rvc)} \! \left[ \log \frac{\pi_\theta(\rvx_{1:T} | \rvx_0, \rvc)}{\pi_\mathrm{ref}(\rvx_{1:T} | \rvx_0, \rvc)} \right] \right] \\
    &= \mathrm{KL}[\pi_\theta(\rvx_0 | \rvc) || \pi_\mathrm{ref}(\rvx_0 | \rvc)] + \E_{\pi_\theta(\rvx_0 | \rvc)} \! \left[ \mathrm{KL}[\pi_\theta(\rvx_{1:T} | \rvx_0, \rvc) || \pi_\mathrm{ref}(\rvx_{1:T} | \rvx_0, \rvc)] \right] \\
    &\geq \mathrm{KL}[\pi_\theta(\rvx_0 | \rvc) || \pi_\mathrm{ref}(\rvx_0 | \rvc)].
\end{align}
\end{proof}

\newpage
\subsection{Maximizer of the Lower Bound of RLHF Objective}
\label{sec:maximizer}

In \Cref{lem:maximizer}, we prove that \Cref{eqn:optimal_solution} maximizes the objective in \Cref{eqn:objective}, a lower bound of the RLHF objective.

\begin{lemma}
\label{lem:maximizer}
Define
\begin{align}
    \pi_{\theta^\star}(\bar{\rvx} | \rvc) = \frac{1}{Z(\rvc)} \pi_\mathrm{ref}(\bar{\rvx} | \rvc) \exp \! \left( \frac{1}{\beta} r(\rvx_0, \rvc) \right),
\end{align}
where
\begin{align}
    Z(\rvc) = \int \pi_\mathrm{ref}(\bar{\rvx} | \rvc) \exp \! \left( \frac{1}{\beta} r(\rvx_0, \rvc) \right) \mathrm{d}\bar{\rvx}
\end{align}
is the partition function. Then $\pi_{\theta^\star}$ is the optimal solution to the following maximization problem:
\begin{align}
\label{eqn:maximization_problem}
    \max_{\pi_\theta} \ \E_{\rvc \sim p(\rvc)} \! \left[ \E_{\rvx_0 \sim \pi_\theta(\rvx_0 | \rvc)} [r(\rvx_0, \rvc)] - \beta \mathrm{KL}[\pi_\theta(\bar{\rvx} | \rvc) || \pi_\mathrm{ref}(\bar{\rvx} | \rvc)] \right].
\end{align}
\end{lemma}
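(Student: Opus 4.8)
The plan is to rewrite the objective in \Cref{eqn:maximization_problem} so that, for each fixed $\rvc$, the inner maximization becomes a KL divergence between $\pi_\theta(\bar{\rvx}|\rvc)$ and the claimed optimum $\pi_{\theta^\star}(\bar{\rvx}|\rvc)$, up to a $\theta$-independent additive term. The key observation is that the reward $r(\rvx_0,\rvc)$, although written as a function of $\rvx_0$ only, can be viewed as a function of the whole trajectory $\bar{\rvx}$, so that $\E_{\rvx_0\sim\pi_\theta(\rvx_0|\rvc)}[r(\rvx_0,\rvc)] = \E_{\bar{\rvx}\sim\pi_\theta(\bar{\rvx}|\rvc)}[r(\rvx_0,\rvc)]$. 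This is what makes the objective expressible entirely in terms of the trajectory distribution, matching the trajectory-level KL in \Cref{eqn:maximization_problem}.

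The steps I would carry out, for each fixed $\rvc$, are: (i) write the per-prompt objective as $\E_{\pi_\theta(\bar{\rvx}|\rvc)}\!\left[r(\rvx_0,\rvc) - \beta\log\frac{\pi_\theta(\bar{\rvx}|\rvc)}{\pi_\mathrm{ref}(\bar{\rvx}|\rvc)}\right]$; (ii) factor out $-\beta$ and recognize $r(\rvx_0,\rvc)/\beta = \log\exp(r(\rvx_0,\rvc)/\beta) = \log\bigl(Z(\rvc)\,\pi_{\theta^\star}(\bar{\rvx}|\rvc)/\pi_\mathrm{ref}(\bar{\rvx}|\rvc)\bigr)$ using the definition of $\pi_{\theta^\star}$; (iii) substitute to collapse the bracket into $-\beta\log\frac{\pi_\theta(\bar{\rvx}|\rvc)}{\pi_{\theta^\star}(\bar{\rvx}|\rvc)} + \beta\log Z(\rvc)$, so the per-prompt objective equals $-\beta\,\mathrm{KL}[\pi_\theta(\bar{\rvx}|\rvc)\,\|\,\pi_{\theta^\star}(\bar{\rvx}|\rvc)] + \beta\log Z(\rvc)$; (iv) note $\beta>0$ and $\log Z(\rvc)$ is independent of $\theta$, so the maximum over $\pi_\theta$ is attained exactly when the KL term vanishes, i.e.\ $\pi_\theta(\bar{\rvx}|\rvc) = \pi_{\theta^\star}(\bar{\rvx}|\rvc)$, by Gibbs' inequality; (v) take $\E_{\rvc\sim p(\rvc)}$ of both sides — since the maximizer is the same for every $\rvc$, it maximizes the expectation as well.

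The main subtlety, rather than obstacle, is step (i)/(v): one must be careful that $\pi_\theta$ ranges over diffusion models of the prescribed ancestral form, so $\pi_{\theta^\star}$ defined on trajectories must itself be realizable as such a model (or the statement interpreted as optimality within, or as a supremum over, that family); the proof as sketched shows $\pi_{\theta^\star}$ is the unconstrained optimizer over all trajectory distributions absolutely continuous w.r.t.\ $\pi_\mathrm{ref}$, which a fortiori gives optimality if it lies in the family. I would also remark that $Z(\rvc)$ is finite whenever $r$ is bounded above (true for the reward models used), so $\pi_{\theta^\star}$ is a well-defined probability distribution and the KL manipulations are valid. The rest is the routine Gibbs' inequality argument.
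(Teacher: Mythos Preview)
Your proposal is correct and follows essentially the same route as the paper: rewrite the reward expectation over trajectories, absorb $\exp(r/\beta)$ into $\pi_\mathrm{ref}$ via the definition of $\pi_{\theta^\star}$ and $Z(\rvc)$, and reduce the objective to $-\beta\,\mathrm{KL}[\pi_\theta\|\pi_{\theta^\star}]$ plus a $\theta$-independent term, concluding by Gibbs' inequality. Your additional remarks on realizability of $\pi_{\theta^\star}$ within the diffusion family and finiteness of $Z(\rvc)$ are valid caveats that the paper's proof does not explicitly address.
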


\begin{proof}
We provide the proof below, which is inspired by DPO~\citep{rafailov2023dpo}.
\begin{align}
    & \max_{\pi_\theta} \ \E_{\rvc \sim p(\rvc)} \! \left[ \E_{\rvx_0 \sim \pi_\theta(\rvx_0 | \rvc)} [r(\rvx_0, \rvc)] - \beta \mathrm{KL}[\pi_\theta(\bar{\rvx} | \rvc) || \pi_\mathrm{ref}(\bar{\rvx} | \rvc)] \right] \\
    ={}& \max_{\pi_\theta} \ \E_{\rvc \sim p(\rvc)} \! \left[ \E_{\bar{\rvx} \sim \pi_\theta(\bar{\rvx} | \rvc)} [r(\rvx_0, \rvc)] - \beta \mathrm{KL}[\pi_\theta(\bar{\rvx} | \rvc) || \pi_\mathrm{ref}(\bar{\rvx} | \rvc)] \right] \\
    ={}& \max_{\pi_\theta} \ \E_{\rvc \sim p(\rvc)} \E_{\bar{\rvx} \sim \pi_\theta(\bar{\rvx} | \rvc)} \! \left[ r(\rvx_0, \rvc) - \beta \log \frac{\pi_\theta(\bar{\rvx} | \rvc)}{\pi_\mathrm{ref}(\bar{\rvx} | \rvc)} \right] \\
    ={}& \min_{\pi_\theta} \ \E_{\rvc \sim p(\rvc)} \E_{\bar{\rvx} \sim \pi_\theta(\bar{\rvx} | \rvc)} \! \left[ \log \frac{\pi_\theta(\bar{\rvx} | \rvc)}{\pi_\mathrm{ref}(\bar{\rvx} | \rvc)} - \frac{1}{\beta} r(\rvx_0, \rvc) \right] \\
    ={}& \min_{\pi_\theta} \ \E_{\rvc \sim p(\rvc)} \E_{\bar{\rvx} \sim \pi_\theta(\bar{\rvx} | \rvc)} \! \left[ \log \frac{\pi_\theta(\bar{\rvx} | \rvc)}{\pi_\mathrm{ref}(\bar{\rvx} | \rvc) \exp \! \left( \frac{1}{\beta} r(\rvx_0, \rvc) \right)} \right] \\
    ={}& \min_{\pi_\theta} \ \E_{\rvc \sim p(\rvc)} \E_{\bar{\rvx} \sim \pi_\theta(\bar{\rvx} | \rvc)} \! \left[ \log \frac{\pi_\theta(\bar{\rvx} | \rvc)}{\pi_{\theta^\star}(\bar{\rvx} | \rvc) Z(\rvc)} \right] \\
    ={}& \min_{\pi_\theta} \ \E_{\rvc \sim p(\rvc)} \! \left[ \E_{\bar{\rvx} \sim \pi_\theta(\bar{\rvx} | \rvc)} \! \left[ \log \frac{\pi_\theta(\bar{\rvx} | \rvc)}{\pi_{\theta^\star}(\bar{\rvx} | \rvc)} \right] - \log Z(\rvc) \right] \\
    ={}& \min_{\pi_\theta} \ \E_{\rvc \sim p(\rvc)} \! \left[ \mathrm{KL}[\pi_\theta(\bar{\rvx} | \rvc) || \pi_{\theta^\star}(\bar{\rvx} | \rvc)] - \log Z(\rvc) \right] \\
    ={}& \min_{\pi_\theta} \ \E_{\rvc \sim p(\rvc)} \! \left[ \mathrm{KL}[\pi_\theta(\bar{\rvx} | \rvc) || \pi_{\theta^\star}(\bar{\rvx} | \rvc)] \right].
\end{align}
Since $\mathrm{KL}[\pi_\theta(\bar{\rvx} | \rvc) || \pi_{\theta^\star}(\bar{\rvx} | \rvc)] \geq 0$, and $\mathrm{KL}[\pi_\theta(\bar{\rvx} | \rvc) || \pi_{\theta^\star}(\bar{\rvx} | \rvc)] = 0$ if and only if $\pi_\theta(\bar{\rvx} | \rvc) = \pi_{\theta^\star}(\bar{\rvx} | \rvc)$, we conclude that the optimal solution to \Cref{eqn:maximization_problem} is $\pi_\theta(\bar{\rvx} | \rvc) = \pi_{\theta^\star}(\bar{\rvx} | \rvc)$ for all $\rvc$.
\end{proof}

\newpage
\subsection{Necessary and Sufficient Conditions for the Optimal Solution}
\label{sec:equivalent_condition}

In \Cref{lem:equivalent_condition}, we provide theoretical justification for our proposed RDP objective in \Cref{eqn:mse_loss}.

\begin{lemma}
\label{lem:equivalent_condition}
\begin{align}
    & \pi_\theta(\bar{\rvx} | \rvc) = \pi_{\theta^\star}(\bar{\rvx} | \rvc), \quad \forall \bar{\rvx}, \rvc \\ \label{eqn:equivalent_condition}
    \iff{}& \log \frac{\pi_\theta(\bar{\rvx}^a | \rvc)}{\pi_\mathrm{ref}(\bar{\rvx}^a | \rvc)} - \log \frac{\pi_\theta(\bar{\rvx}^b | \rvc)}{\pi_\mathrm{ref}(\bar{\rvx}^b | \rvc)} = \frac{r(\rvx_0^a, \rvc) - r(\rvx_0^b, \rvc)}{\beta}, \quad \forall \bar{\rvx}^a, \bar{\rvx}^b, \rvc.
\end{align}
\end{lemma}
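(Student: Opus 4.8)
The plan is to prove the two directions separately, with the forward direction ($\Rightarrow$) being essentially immediate from \Cref{lem:maximizer} and the derivation already carried out in the main text, and the backward direction ($\Leftarrow$) requiring a bit more care.

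For the forward direction, suppose $\pi_\theta(\bar{\rvx} | \rvc) = \pi_{\theta^\star}(\bar{\rvx} | \rvc)$ for all $\bar{\rvx}, \rvc$. Then by the definition of $\pi_{\theta^\star}$ in \Cref{eqn:optimal_solution}, taking logarithms gives $\log \frac{\pi_\theta(\bar{\rvx} | \rvc)}{\pi_\mathrm{ref}(\bar{\rvx} | \rvc)} = \frac{1}{\beta} r(\rvx_0, \rvc) - \log Z(\rvc)$ for every $\bar{\rvx}, \rvc$. Writing this identity for $\bar{\rvx}^a$ and for $\bar{\rvx}^b$ (with the \emph{same} $\rvc$) and subtracting, the $\log Z(\rvc)$ terms cancel and we obtain exactly \Cref{eqn:equivalent_condition}. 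This is just a restatement of the chain of equations \Cref{eqn:reward_diff} in the main text.

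For the backward direction, suppose \Cref{eqn:equivalent_condition} holds for all $\bar{\rvx}^a, \bar{\rvx}^b, \rvc$. The key observation is that $\hat r_\theta(\bar{\rvx}, \rvc) - \tfrac{1}{\beta} r(\rvx_0, \rvc)$ is then independent of $\bar{\rvx}$ for each fixed $\rvc$: indeed, the hypothesis says precisely that this quantity evaluated at $\bar{\rvx}^a$ equals the same quantity evaluated at $\bar{\rvx}^b$. So there is a function $g(\rvc)$ with $\log \frac{\pi_\theta(\bar{\rvx} | \rvc)}{\pi_\mathrm{ref}(\bar{\rvx} | \rvc)} = \frac{1}{\beta} r(\rvx_0, \rvc) - g(\rvc)$, i.e. $\pi_\theta(\bar{\rvx} | \rvc) = e^{-g(\rvc)} \pi_\mathrm{ref}(\bar{\rvx} | \rvc) \exp(\tfrac1\beta r(\rvx_0,\rvc))$. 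Integrating both sides over $\bar{\rvx}$ and using that $\pi_\theta(\cdot | \rvc)$ is a normalized density forces $e^{-g(\rvc)} = 1/Z(\rvc)$, hence $\pi_\theta(\bar{\rvx} | \rvc) = \pi_{\theta^\star}(\bar{\rvx} | \rvc)$ for all $\bar{\rvx}, \rvc$.

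The main subtlety — and the step I would be most careful about — is the normalization argument in the backward direction: one must note that $Z(\rvc)$ is finite (so that $\pi_{\theta^\star}$ is a well-defined density) and that $\pi_\theta(\bar{\rvx}|\rvc)$ integrates to one in $\bar{\rvx}$, which together pin down the constant $e^{-g(\rvc)}$ uniquely. Everything else is a routine manipulation of the defining identity \Cref{eqn:optimal_solution}; no appeal to \Cref{lem:maximizer} beyond the explicit form of $\pi_{\theta^\star}$ is actually needed, though one could alternatively phrase the forward direction as a corollary of it.
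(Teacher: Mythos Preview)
Your proposal is correct and follows essentially the same approach as the paper's own proof: both directions are handled identically, with the forward direction deferred to the main-text derivation and the backward direction proceeding by observing that $\log\frac{\pi_\theta(\bar{\rvx}|\rvc)}{\pi_\mathrm{ref}(\bar{\rvx}|\rvc)} - \tfrac{1}{\beta}r(\rvx_0,\rvc)$ is constant in $\bar{\rvx}$, writing it as a function of $\rvc$ alone, and then fixing that function via the normalization constraint $\int \pi_\theta(\bar{\rvx}|\rvc)\,\mathrm{d}\bar{\rvx}=1$. Your added remark about needing $Z(\rvc)<\infty$ is a bit more explicit than the paper, but otherwise the arguments coincide.
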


\begin{proof}
We have shown ``$\implies$'' in the main text. We provide the proof for ``$\impliedby$'' below.

\Cref{eqn:equivalent_condition} implies that
\begin{align}
\label{eqn:const_diff}
    \log \frac{\pi_\theta(\bar{\rvx} | \rvc)}{\pi_\mathrm{ref}(\bar{\rvx} | \rvc)} - \frac{1}{\beta} r(\rvx_0, \rvc)
\end{align}
is a constant \wrt $\bar{\rvx}$. Therefore, we can write \Cref{eqn:const_diff} as a function of $\rvc$ alone:
\begin{align}
    \log \frac{\pi_\theta(\bar{\rvx} | \rvc)}{\pi_\mathrm{ref}(\bar{\rvx} | \rvc)} - \frac{1}{\beta} r(\rvx_0, \rvc) = f(\rvc).
\end{align}
Hence,
\begin{align}
    \pi_\theta(\bar{\rvx} | \rvc) = \pi_\mathrm{ref}(\bar{\rvx} | \rvc) \exp \! \left( \frac{1}{\beta} r(\rvx_0, \rvc) \right) \exp \! \left( f(\rvc) \right).
\end{align}
It suffices to show that
\begin{align}
    \exp \! \left( f(\rvc) \right) = \frac{1}{Z(\rvc)}, \quad \forall \rvc.
\end{align}
This follows from the fact that the probability density function $\pi_\theta(\bar{\rvx} | \rvc)$ must satisfy:
\begin{align}
    1 &= \int \pi_\theta(\bar{\rvx} | \rvc) \, \mathrm{d}\bar{\rvx} \\
    &= \int \pi_\mathrm{ref}(\bar{\rvx} | \rvc) \exp \! \left( \frac{1}{\beta} r(\rvx_0, \rvc) \right) \exp \! \left( f(\rvc) \right) \mathrm{d}\bar{\rvx} \\
    &= \exp \! \left( f(\rvc) \right) \int \pi_\mathrm{ref}(\bar{\rvx} | \rvc) \exp \! \left( \frac{1}{\beta} r(\rvx_0, \rvc) \right) \mathrm{d}\bar{\rvx} \\
    &= \exp \! \left( f(\rvc) \right) Z(\rvc).
\end{align}
\end{proof}

\newpage
\section{Instability of DDPO in Large-Scale Reward Finetuning}
\label{sec:ddpo_instability}

\begin{figure}[h]
    \centering
    \includegraphics[width=\textwidth]{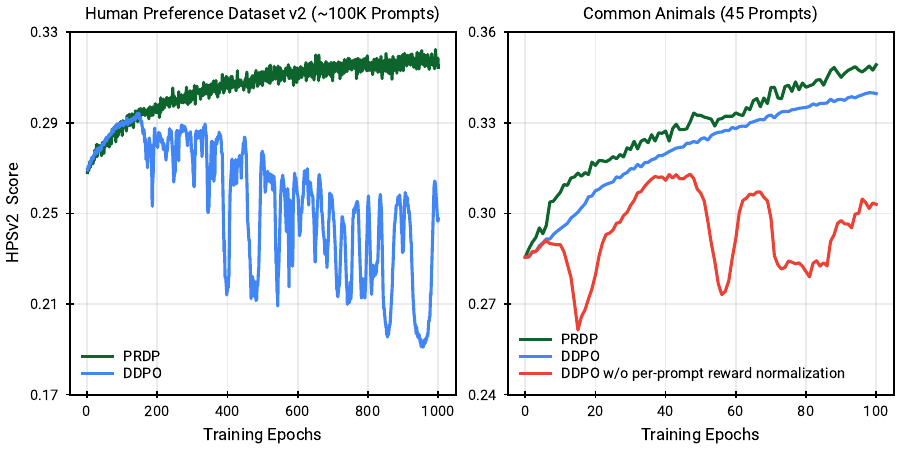}
    \caption{\textbf{Analysis of the instability of DDPO in large-scale training.} We plot the training curves of PRDP and DDPO on the large-scale Human Preference Dataset v2 (Left) and the small-scale Common Animals (Right). PRDP outperforms DDPO in the small-scale setting, and maintains stability in the large-scale setting where DDPO fails. Our ablation study suggests that the per-prompt reward normalization in DDPO is key to its stability, and the inability to perform such normalization in the large-scale setting likely causes its failure.}
    \label{fig:ddpo_ablation}
\end{figure}

\Cref{fig:ddpo_ablation} shows the training curve of PRDP and DDPO~\citep{ddpo}, where the reward model is HPSv2~\citep{HPSv2}. From \Cref{fig:ddpo_ablation} (Left), we observe that when trained on the large-scale Human Preference Dataset v2 (HPD v2)~\citep{HPSv2}, DDPO fails to stably optimize the reward. We conjecture that this is because the per-prompt reward normalization is rarely enabled in the large-scale setting, since each prompt can only be seen a few times. Specifically, in each epoch, DDPO randomly samples $512$ prompts, so on average, each prompt can be seen $512 \times 1000 / 100\text{K} \approx 5$ times. This is insufficient to obtain a good estimate of the per-prompt expected reward. In this case, DDPO will compute a prompt-agnostic expected reward, by averaging the rewards across all $512$ prompts. To verify that such prompt-agnostic reward normalization causes training instability, we conduct an ablation study of DDPO in our small-scale setting with $45$ training prompts. As shown in \Cref{fig:ddpo_ablation} (Right), DDPO without per-prompt reward normalization is unstable even in the small-scale setting, suggesting that the inability to perform per-prompt reward normalization can be a limiting factor in scaling DDPO to large prompt datasets. In contrast to DDPO, PRDP can steadily improve the reward score and maintain stability in both small-scale and large-scale settings.

\newpage
\section{Effect of KL Regularization}
\label{sec:kl}

\begin{figure}[h]
    \centering
    \includegraphics[width=\textwidth]{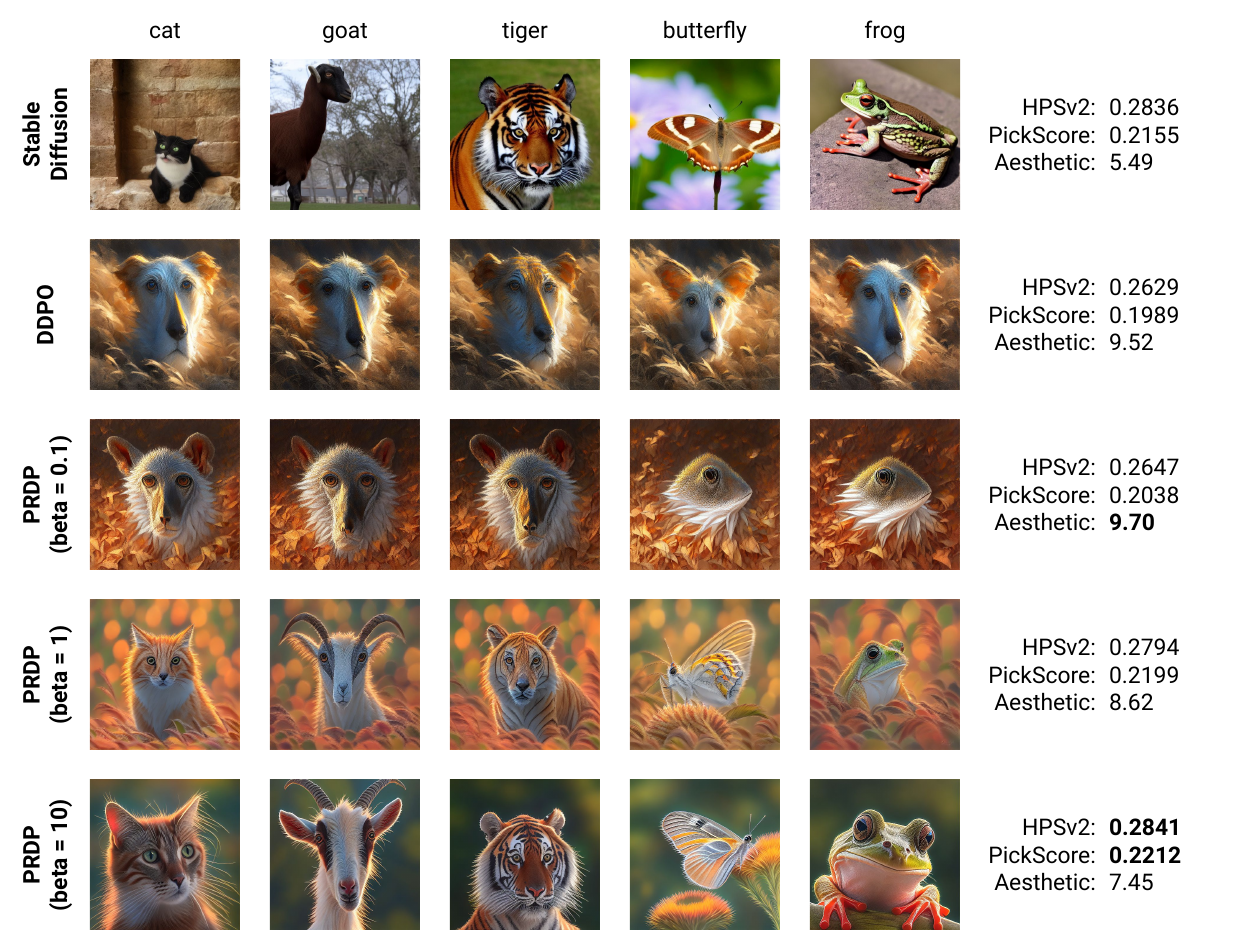}
    \caption{\textbf{Effect of KL regularization on optimizing aesthetic score.} DDPO and PRDP are finetuned from Stable Diffusion v1.4 on $45$ prompts of common animal names. Evaluation is performed on the same set of prompts. In addition to aesthetic score, we report HPSv2 and PickScore which reflect text-image alignment but are not used during training. Samples within each column are generated from the prompt shown on top, using the same random seed. PRDP with a large KL weight $\beta$ can alleviate the reward over-optimization problem encountered by DDPO, significantly improving the aesthetic quality over Stable Diffusion while maintaining text-image alignment.}
    \label{fig:beta_sweep}
\end{figure}

In contrast to DDPO~\citep{ddpo} which only cares about maximizing the reward, PRDP is formulated with a KL regularization, allowing us to alleviate the problem of reward over-optimization by increasing the KL weight $\beta$. We demonstrate the effect of KL regularization in \Cref{fig:beta_sweep}. Here, the reward used for training is the aesthetic score given by the LAION aesthetic predictor. It only takes images as input, and therefore ignores the text-image alignment. We finetune DDPO and PRDP from Stable Diffusion v1.4~\citep{stable-diffusion} for $250$ epochs on $45$ training prompts of common animal names as used in DDPO, with $512$ reward queries in each epoch. For evaluation, we additionally use HPSv2~\citep{HPSv2} and PickScore~\citep{PickScore} that reflect text-image alignment. The reported reward scores are averaged over $64$ random samples per training prompt, using the same random seed for Stable Diffusion v1.4, DDPO, and PRDP.

We observe that DDPO, without KL regularization, is prone to reward over-optimization. It ignores the text prompt and generates similar images for all prompts. PRDP with a small KL weight (\eg, $\beta = 0.1$) has the same problem, but achieves higher reward scores than DDPO, showing a better reward maximization capability. As the KL weight increases, PRDP is able to better preserve the text-image alignment, indicated by the increase in HPSv2 and PickScore. With $\beta = 10$, PRDP significantly improves the aesthetic score over Stable Diffusion v1.4 without sacrificing text-image alignment.

\newpage
\section{Large-Scale Multi-Reward Finetuning}

\begin{table}[h]
  \captionsetup{width=0.63\textwidth}
  \caption{\textbf{Reward score comparison on unseen prompts.} We use a weighted combination of rewards: PickScore $= 10$, HPSv2 $= 2$, Aesthetic $= 0.05$. PRDP is finetuned from Stable Diffusion v1.4 on the training set prompts of Pick-a-Pic v1 dataset.}
  \label{tab:multi_reward}
  \centering
  \begin{adjustbox}{max width=\textwidth}
  \begin{tabular}{lccccc}
    \toprule
    & \makecell{Pick-a-Pic v1\\Test Set} & \makecell{HPD v2\\Animation} & \makecell{HPD v2\\Concept Art} & \makecell{HPD v2\\Painting} & \makecell{HPD v2\\Photo} \\
    \midrule
    SD v1.4 & $2.888$ & $2.927$ & $2.877$ & $2.883$ & $2.984$ \\
    PRDP                  & $\mathbf{3.208}$ & $\mathbf{3.296}$ & $\mathbf{3.264}$ & $\mathbf{3.274}$ & $\mathbf{3.214}$ \\
    \bottomrule
  \end{tabular}
  \end{adjustbox}
\end{table}

In this section, we provide additional results for our large-scale multi-reward finetuning experiment. Following DRaFT~\citep{draft}, we use a weighted combination of rewards: PickScore $= 10$, HPSv2 $= 2$, Aesthetic $= 0.05$. We finetune Stable Diffusion v1.4~\citep{stable-diffusion} on the training set prompts of Pick-a-Pic v1 dataset~\citep{PickScore}. We evaluate our finetuned model on a variety of unseen prompts, including $500$ prompts from the Pick-a-Pic v1 test set, and $800$ prompts from each of the four benchmark categories of the Human Preference Dataset v2 (HPD v2)~\citep{HPSv2}, namely animation, concept art, painting, and photo. \Cref{tab:multi_reward} reports the reward scores before and after finetuning. The reward scores are averaged over $64$ random samples per prompt, using the same random seed for Stable Diffusion v1.4 and PRDP. We further show generation samples for each test prompt set in \Cref{fig:pick_a_pic_test,fig:anime,fig:concept_art,fig:painting,fig:photo}. As can be seen, PRDP significantly improves generation quality across all five prompt sets.

\section{Hyperparameters}

\begin{table}[h]
  \caption{\textbf{PRDP training hyperparameters.}}
  \centering
  \begin{adjustbox}{max width=\textwidth}
  \begin{tabular}{lcccc}
    \toprule
    \textbf{Name} & \textbf{Symbol} & \textbf{\makecell{Small-Scale\\Finetuning}} & \textbf{\makecell{Large-Scale\\Finetuning}} & \textbf{\makecell{Large-Scale Multi-Reward\\Finetuning}} \\
    \midrule
    Training epochs & $E$ & $100$ & $1000$ & $1000$ \\
    Gradient updates per epoch & $K$ & $10$ & $1$ & $1$ \\
    Prompts per epoch & $N$ & $32$ & $64$ & $64$ \\
    Images per prompt & $B$ & $16$ & $8$ & $8$ \\
    KL weight & $\beta$ & $3 {\times} 10^{-5}$ & $3 {\times} 10^{-6}$ & $3 {\times} 10^{-5}$ \\
    DDPM steps & $T$ & $50$ & $50$ & $50$ \\
    Stepwise clipping range & $\epsilon$ & $1 {\times} 10^{-6}$ & $1 {\times} 10^{-4}$ & $1 {\times} 10^{-4}$ \\
    Classifier-free guidance scale & --- & $5.0$ & $5.0$ & $5.0$ \\
    Optimizer & --- & AdamW & AdamW & AdamW \\
    Gradient clipping & --- & $1.0$ & $1.0$ & $1.0$ \\
    Learning rate & --- & $1 {\times} 10^{-5}$ & $7 {\times} 10^{-6}$ & $1 {\times} 10^{-5}$ \\
    Weight decay & --- & $1 {\times} 10^{-4}$ & $1 {\times} 10^{-4}$ & $1 {\times} 10^{-4}$ \\
    \bottomrule
  \end{tabular}
  \end{adjustbox}
\end{table}

\newpage
\section{Effect of Clipping}

\begin{table}[h]
  \caption{\textbf{Effect of clipping on training stability.}}
  \label{tab:clipping}
  \centering
  \begin{adjustbox}{max width=\textwidth}
  \begin{tabular}{ccc}
    \toprule
    & w/o Clipping & w/ Clipping \\
    \midrule
    DDPO & \makecell[l]{Small scale: \Unstable\\Large scale: \Unstable} & \makecell[l]{Small scale: \Stable\\Large scale: \Unstable} \\
    \midrule
    PRDP & \makecell[l]{Small scale: \Unstable\\Large scale: \Unstable} & \makecell[l]{Small scale: \Stable\\Large scale: \Stable} \\
    \bottomrule
  \end{tabular}
  \end{adjustbox}
\end{table}

\Cref{tab:clipping} summarizes the effect of clipping on the training stability of both DDPO~\citep{ddpo} and PRDP. For DDPO, we use PPO-based clipping~\citep{schulman2017ppo}, while for PRDP, we use the proximal updates described in \Cref{sec:clipping}. We observe that clipping is key to stability of small-scale training, whereas using the PRDP objective and clipping are both indispensable for achieving stability in large-scale training.

\section{Jax Implementation of PRDP Loss}

\lstinputlisting{code/prdp_loss.py}

\newpage
\begin{figure}[p]
    \centering
    \includegraphics[width=\textwidth]{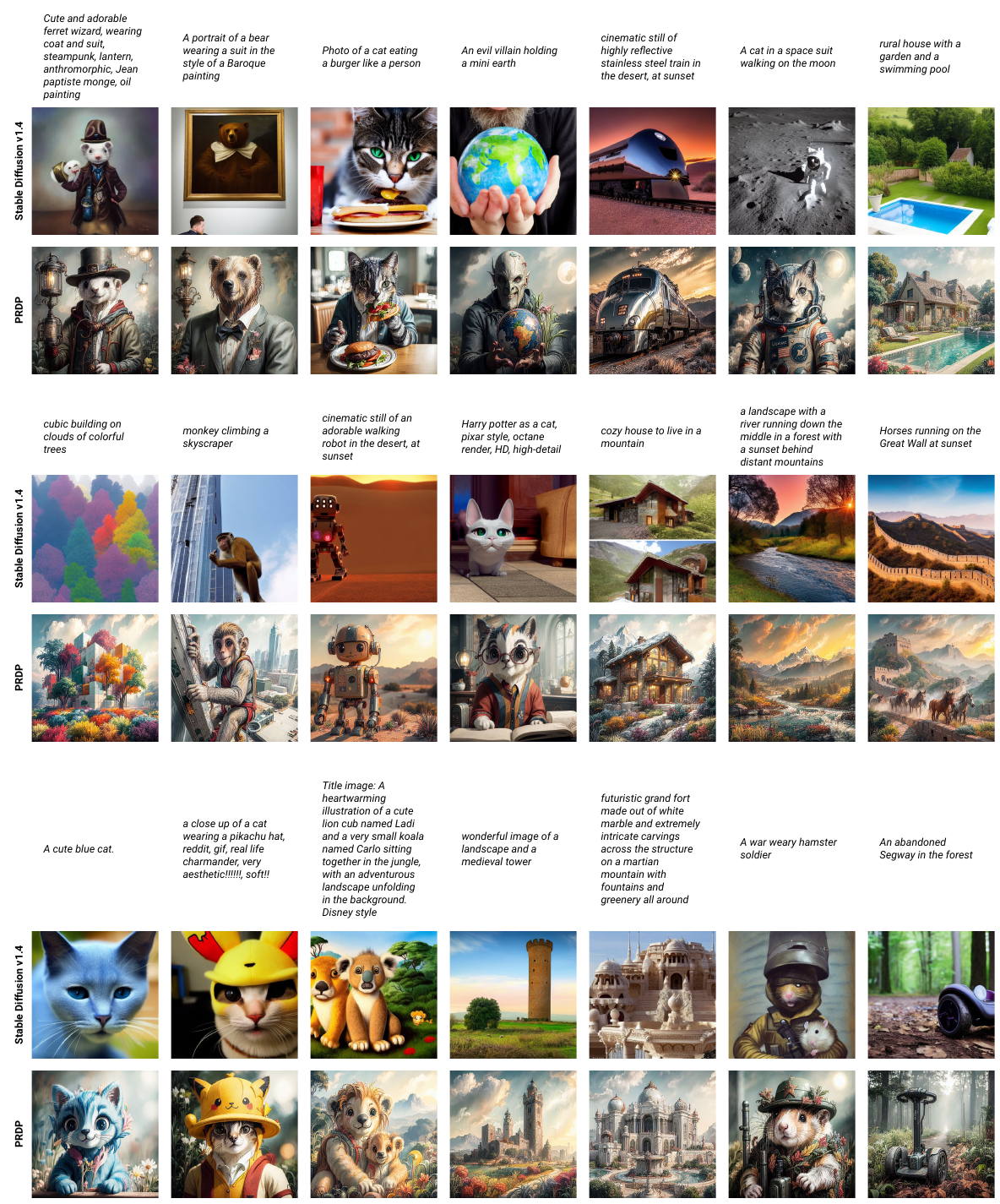}
    \caption{\textbf{Generation samples on unseen prompts from the Pick-a-Pic v1 test set.} PRDP is finetuned from Stable Diffusion v1.4 on the training set prompts of Pick-a-Pic v1 dataset, using a weighted combination of rewards: PickScore $= 10$, HPSv2 $= 2$, Aesthetic $= 0.05$. For each prompt, the generation sample from Stable Diffusion v1.4 and PRDP use the same random seed.}
    \label{fig:pick_a_pic_test}
\end{figure}

\newpage
\begin{figure}[p]
    \centering
    \includegraphics[width=\textwidth]{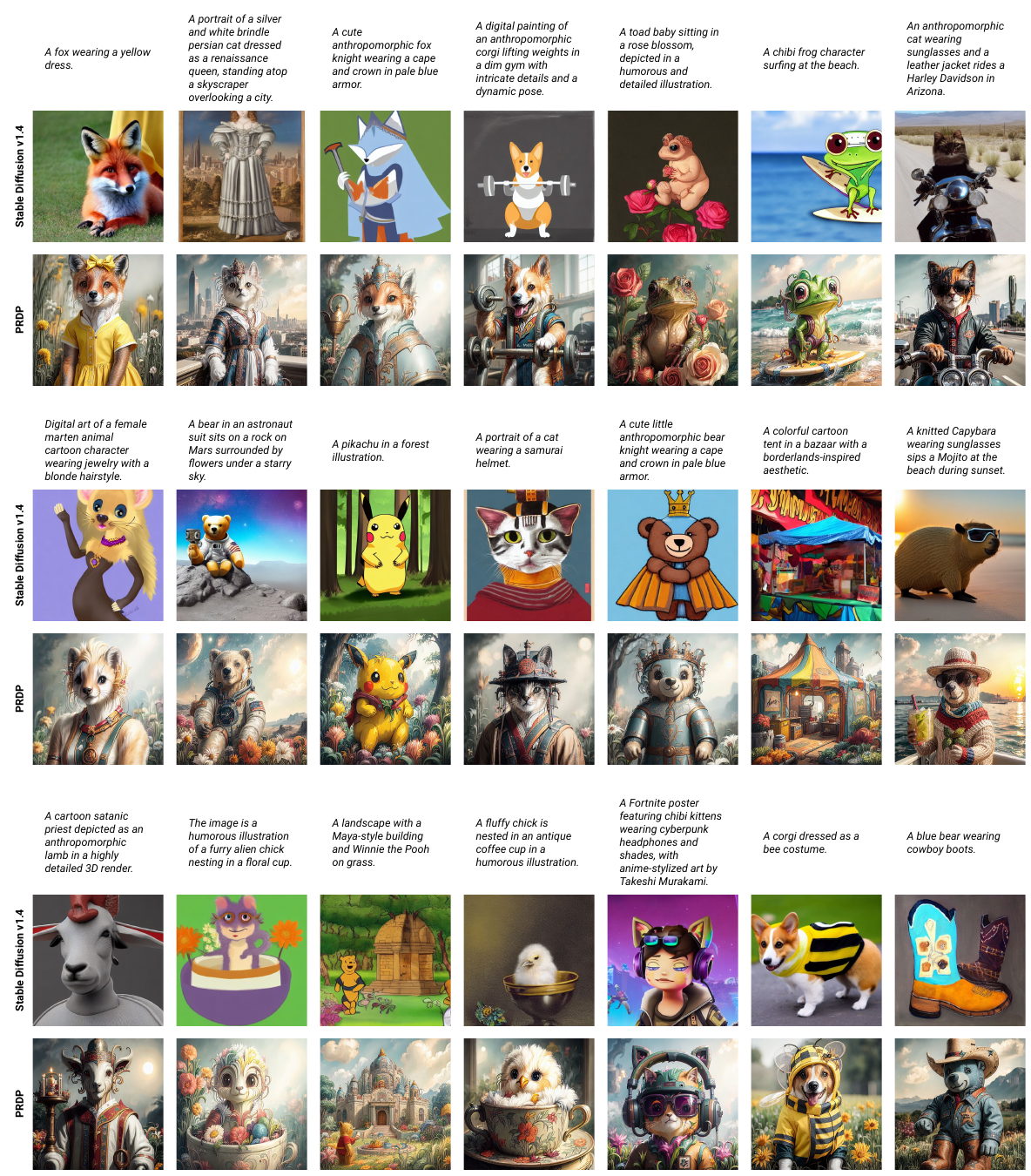}
    \caption{\textbf{Generation samples on unseen prompts from the HPD v2 animation benchmark.} PRDP is finetuned from Stable Diffusion v1.4 on the training set prompts of Pick-a-Pic v1 dataset, using a weighted combination of rewards: PickScore $= 10$, HPSv2 $= 2$, Aesthetic $= 0.05$. For each prompt, the generation sample from Stable Diffusion v1.4 and PRDP use the same random seed.}
    \label{fig:anime}
\end{figure}

\newpage
\begin{figure}[p]
    \centering
    \includegraphics[width=\textwidth]{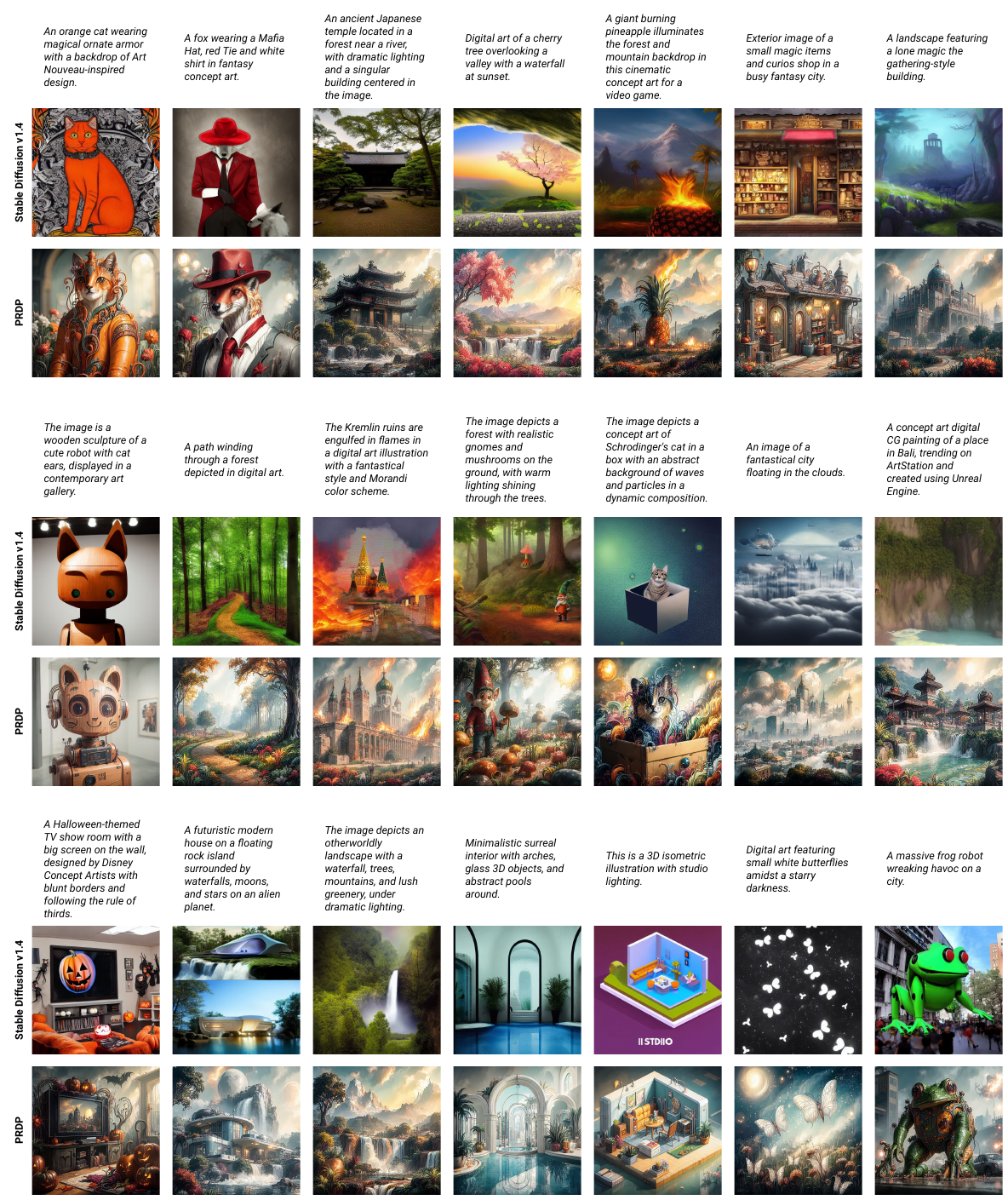}
    \caption{\textbf{Generation samples on unseen prompts from the HPD v2 concept art benchmark.} PRDP is finetuned from Stable Diffusion v1.4 on the training set prompts of Pick-a-Pic v1 dataset, using a weighted combination of rewards: PickScore $= 10$, HPSv2 $= 2$, Aesthetic $= 0.05$. For each prompt, the generation sample from Stable Diffusion v1.4 and PRDP use the same random seed.}
    \label{fig:concept_art}
\end{figure}

\newpage
\begin{figure}[p]
    \centering
    \includegraphics[width=\textwidth]{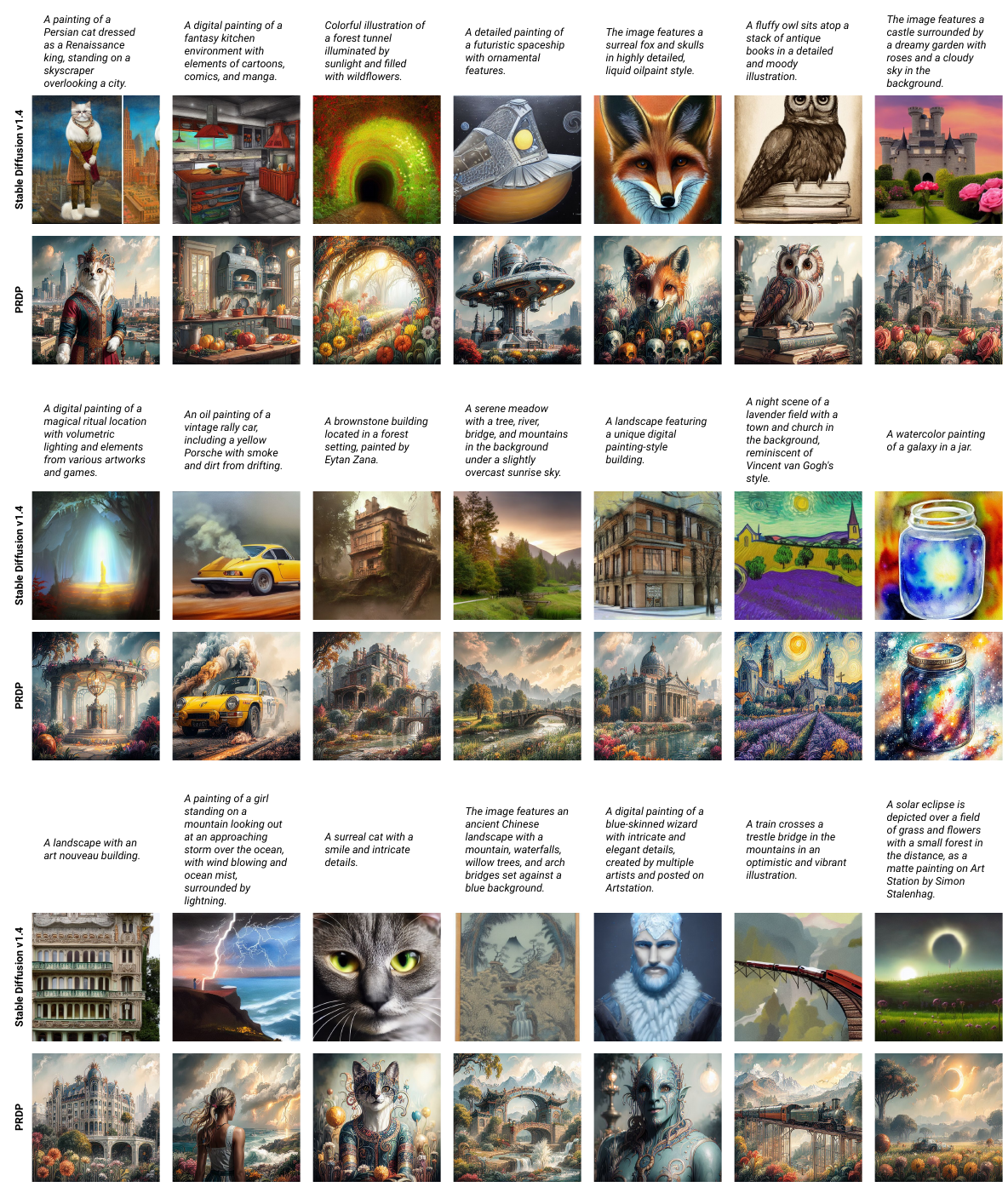}
    \caption{\textbf{Generation samples on unseen prompts from the HPD v2 painting benchmark.} PRDP is finetuned from Stable Diffusion v1.4 on the training set prompts of Pick-a-Pic v1 dataset, using a weighted combination of rewards: PickScore $= 10$, HPSv2 $= 2$, Aesthetic $= 0.05$. For each prompt, the generation sample from Stable Diffusion v1.4 and PRDP use the same random seed.}
    \label{fig:painting}
\end{figure}

\newpage
\begin{figure}[p]
    \centering
    \includegraphics[width=\textwidth]{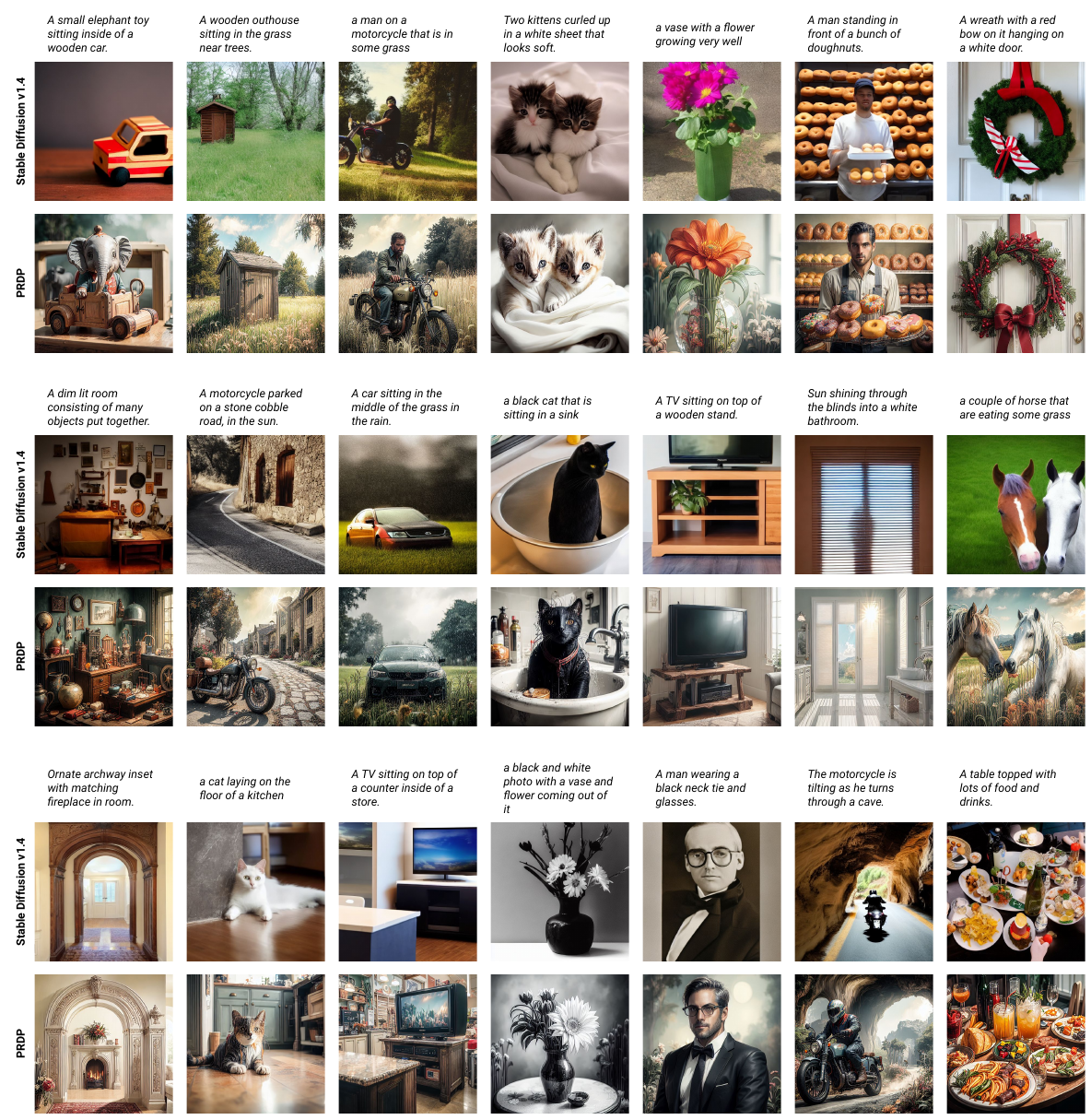}
    \caption{\textbf{Generation samples on unseen prompts from the HPD v2 photo benchmark.} PRDP is finetuned from Stable Diffusion v1.4 on the training set prompts of Pick-a-Pic v1 dataset, using a weighted combination of rewards: PickScore $= 10$, HPSv2 $= 2$, Aesthetic $= 0.05$. For each prompt, the generation sample from Stable Diffusion v1.4 and PRDP use the same random seed.}
    \label{fig:photo}
\end{figure}

\end{document}